\documentclass{article} 
\usepackage{iclr2026_conference,times}
\usepackage[utf8]{inputenc}
\usepackage[T1]{fontenc}   
\usepackage{hyperref}      
\usepackage{url}           
\usepackage{booktabs}      
\usepackage{amsfonts}      
\usepackage{nicefrac}      
\usepackage{microtype}     
\usepackage{color}
\usepackage{makecell}
\usepackage{multirow}
\usepackage{tikz}
\usepackage{pgfplots}
\usetikzlibrary{patterns}
\usepackage{xcolor}
\usepackage{caption}
\usepackage{subcaption}
\usepackage{graphicx,wrapfig}
\usepackage{colortbl}
\usepackage{amsthm}

\newtheorem{theorem}{Theorem}

\newtheorem{corollary}{Corollary}
\newtheorem{proposition}{Proposition}

\usepackage{amsmath,amsfonts,bm}

\def\eqref#1{equation~\ref{#1}}

\def\1{\bm{1}}

\DeclareMathAlphabet{\mathsfit}{\encodingdefault}{\sfdefault}{m}{sl}
\SetMathAlphabet{\mathsfit}{bold}{\encodingdefault}{\sfdefault}{bx}{n}

\usepackage{hyperref}
\usepackage{url}
\hypersetup{colorlinks=true,linkcolor=red,citecolor=teal,urlcolor=cyan}
\newcommand{\ours}{TCMax}

\title{Multimodal Classification via Total Correlation Maximization}

\author{\textbf{Feng Yu}\thanks{Equal contribution.}~~\ \
        \textbf{Xiangyu Wu}$^*$\ \
        \textbf{Yang Yang}\thanks{Corresponding author.}~~
        \textbf{Jianfeng Lu}$^\dagger$ \\
  Nanjing University of Science and Technology
}

\usepackage{pgfplots}
\pgfplotsset{compat=1.18}  

\iclrfinalcopy
\begin{document}
\maketitle

\begin{abstract}
    Multimodal learning integrates data from diverse sensors to effectively harness information from different modalities. However, recent studies reveal that joint learning often overfits certain modalities while neglecting others, leading to performance inferior to that of unimodal learning. Although previous efforts have sought to balance modal contributions or combine joint and unimodal learning—thereby mitigating the degradation of weaker modalities with promising outcomes—few have examined the relationship between joint and unimodal learning from an information-theoretic perspective.
    In this paper, we theoretically analyze modality competition and propose a method for multimodal classification by maximizing the total correlation between multimodal features and labels. By maximizing this objective, our approach alleviates modality competition while capturing inter-modal interactions via feature alignment. Building on Mutual Information Neural Estimation (MINE), we introduce \textbf{T}otal \textbf{C}orrelation \textbf{N}eural \textbf{E}stimation (\textbf{TCNE}) to derive a lower bound for total correlation. Subsequently, we present TCMax, a hyperparameter-free loss function that maximizes total correlation through variational bound optimization. Extensive experiments demonstrate that TCMax outperforms state-of-the-art joint and unimodal learning approaches. Our code is available at \url{https://github.com/hubaak/TCMax}.
\end{abstract}
\section{Introduction}

Humans better perceive the world through diverse sensory inputs, \textit{i.e.}, text, audio, and vision. 
Likewise, multimodal fusion models~\cite{MMF-01,MMF-02,MMF-03,MMF-10-MLA,MMF-15-OPM,MMF-14-OLM}, which integrate different modalities, are expected to learn more robust and generalized representations than unimodal counterparts. However, recent studies~\cite{MMF-05-G-Blending,MMF-06-Modality-competition,MMF-07-OGM-GE} uncover an intriguing phenomenon in multimodal classification: the best-performing unimodal network surpasses the joint learning network, which can be attributed to the differences in convergence and generalization rates among modalities. In such inconsistent convergence states cases, some dominant modalities are adequately overfitted to the training data, causing the multimodal model to overly rely on dominant modalities while neglecting others, ultimately resulting in suboptimal performance.

To address it, several studies~\cite{MMF-07-OGM-GE,MMF-08-mmcosine,MMF-16-AGM,MMF-12-PMR,MMF-15-OPM} are committed to balancing multimodal joint learning. Representatively, OGM-GE~\cite{MMF-07-OGM-GE} modulates the gradient of modality-specific encoders according to their contribution to prediction, inhibiting modalities that converge faster. AGM~\cite{MMF-16-AGM} dynamically adjusts the gradient contributions from different modalities. However, modality competition~\cite{MMF-06-Modality-competition} points out that despite joint learning allowing for modality interaction, it easily causes the model to saturate dominant modalities prematurely, neglecting unimodal features that are difficult to learn but conducive to generalization. Followed by some works are proposed to harness the benefits of the unimodal learning strategy, e.g., QMF~\cite{MMF-04-QMF} explicitly incorporates a unimodal loss and a regularization term, evaluating the quality of the truncated samples into the loss function. MLA~\cite{MMF-10-MLA} decomposes joint learning into alternating unimodal learning, with a lightweight shared head for modality interaction. MMPareto~\cite{MMF-17-Pareto} considers both the direction and magnitude of gradients, ensuring that unimodal gradients do not interfere with multimodal training.


Despite significant progress in existing methods, current approaches still primarily rely on joint learning or a combination of joint learning and unimodal learning, with little consideration given to the inherent alignment properties of multimodal data.
However, directly integrating joint learning loss, unimodal loss, and alignment loss (i.e., contrastive learning loss) with inherently conflicting optimization objectives requires introducing extra hyperparameters, additional structures, and algorithmic procedures to balance their contributions during training. In this paper, through an information-theoretic analysis, we demonstrate that maximizing the total correlation between the features encoded by each modal encoder in the multimodal model and the labels avoids modality competition while learning inter-modal interactions and incorporating alignment between modalities. 

Inspired by Mutual Information Neural Estimation~\cite{MINE}, we propose the TCMax loss, which employs Total Correlation Neural Estimation to estimate the lower bound of total correlation. By maximizing this lower bound, we enhance the total correlation between features and labels.
We theoretically demonstrate that the output of the model optimized by the TCMax loss possesses the same mathematical significance as that of a model trained using joint learning. Consequently, our method does not require the introduction of additional hyperparameter or structural modifications. Merely employing the TCMax loss during the training phase suffices to achieve favorable results.
In summary, our contributions are as follows:
\begin{itemize} 
\item From an information-theoretic perspective, we elucidate the underlying causes of modality competition and propose that maximizing the total correlation between multimodal features and labels can amalgamate the advantages of joint learning and unimodal learning while incorporating inter-modal alignment
\item We introduce Total Correlation Neural Estimation and, based on this, propose the TCMax loss. Theoretically, we prove that optimizing the TCMax loss can increase total correlation and demonstrate that models utilizing TCMax are capable of estimating the joint distribution of multimodal data and the label.
\item Comprehensive experiments showcase the considerable improvement over previous joint and unimodal learning methods on various multimodal datasets.
\end{itemize}

\section{Related Works}

\paragraph{Modality Imbalance} Integrating information from multimodal data is essential for comprehensively addressing and solving real-world problems~\cite{yu2025vismem,WuLZCJ25,TaAM-CPT}. However, training a multimodal model using a joint learning strategy is challenging because different modalities often exhibit varying data distributions, require distinct network architectures, and have different convergence rates~\cite{MMF-05-G-Blending}. 
Simultaneously, joint learning makes all modalities contribute to one learning objective, causing weak modalities to be suppressed after strong modalities converge, resulting in modality competition~\cite{MMF-06-Modality-competition}. 
Several works~\cite{MMF-07-OGM-GE, MMF-14-OLM, MMF-15-OPM, MMF-08-mmcosine, MMF-16-AGM} have recently been suggested to balance modalities. Representatively, OGM-GE~\cite{MMF-07-OGM-GE} and AGM~\cite{MMF-16-AGM} propose balanced multimodal learning methods that correct the contribution imbalance of different modalities by encouraging intensive gradient updating from suboptimal modalities. However, rebalance methods are not able to overcome modality laziness~\cite{MMF-09-UMT} and fail to exploit unimodal features efficiently. Some works~\cite{MMF-04-QMF, MMF-15-OPM, MMF-10-MLA, MMF-17-Pareto} explicitly or implicitly incorporate unimodal loss into their loss functions to avoid modality laziness. Specifically, MLA~\cite{MMF-10-MLA} decomposes the conventional multimodal joint optimization scenario into an alternating unimodal learning scenario and exchanges information using a shared head for different modalities. ReconBoost~\cite{MMF-16-Recon} alternates between different modalities during the learning process, mitigating the issue of synchronous optimization limitations. MMPareto~\cite{MMF-17-Pareto} balances the objectives of joint learning and unimodal learning using the Pareto method. By preventing modality laziness in multimodal learning, they achieve performance slightly higher than that of unimodal learning. 
 

\paragraph{Information Theory with Multimodal Learning}
In information theory, mutual information quantifies the correlation between two variables in terms of their distribution. Recent works~\cite{MINE, InfoNet} have leveraged neural networks to estimate mutual information, bridging the gap between deep learning and information theory. In contrastive learning, the InfoNCE~\cite{InfoNCE} loss serves as a lower bound estimate of mutual information, providing a solid theoretical foundation for its use. As more applications~\cite{related_works_01_SimCLR, CLIP} demonstrate the effectiveness of the InfoNCE loss, the validity of the underlying mutual information theory has been further validated. For more than two variables, total correlation~\cite{TC} extends mutual information and serves as a measure of the interdependence among multiple variables. In \cite{related_works_02_MVRL_TC}, the authors applied total correlation in the Multi-View Representation Learning problem and achieved promising results, demonstrating the utility of total correlation in multi-variable scenarios.
\section{Method}

\subsection{Motivation and Preliminary}
\paragraph{Problem Formulation.} Consider a multimodal data distribution $\left(x^{(1)}, \dots, x^{(M)}, y\right) \sim \mathbb{P}_{\mathcal{X}^{(1)}, \dots, \mathcal{X}^{(M)}, \mathcal{Y}}$, where $\mathbb{P}_{\mathcal{X}^{(1)}, \dots, \mathcal{X}^{(M)}, \mathcal{Y}}$ denotes the joint probability distribution over the modalities and labels of the train set, $\mathcal{X}^{(m)}$ represents the sample space of the $m$-th modality, $\mathcal{Y}$ corresponds to the label space. 
For each modality $m$, a modality-specific encoder $\psi^{(m)}_{\Theta_m}: \mathcal{X}^{(m)} \rightarrow \mathcal{Z}^{(m)}$ maps the input space $\mathcal{X}^{(m)}$ to its corresponding embedding space $\mathcal{Z}^{(m)} = \left\{\psi^{(m)}_{\Theta_m}\left(x^{(m)}\right) \mid x^{(m)} \in \mathcal{X}^{(m)}\right\}$. 
The embeddings from all modalities are subsequently fed into a prediction head $ f_\theta : \mathcal{Z}^{(1)}\times \dots \times \mathcal{Z}^{(M)} \rightarrow \mathbb{R}^{|\mathcal{Y}|}$  to integrate information across different modalities and predict the probability distribution of labels $ \hat{p}(\hat{y}|Z) = \text{Softmax}(f_\theta(z^{(1)}, \dots, z^{(M)}))_{\hat{y}}$, where $\hat{y}$ denotes the predicted label and $\theta$ represents the parameters of the prediction head. 

\paragraph{Joint Learning.} The objective of multimodal joint learning is to minimize the distance cross-entropy between the predicted distribution and the ground truth distribution:
\begin{equation}
    \begin{split}
        \mathcal{L}_{\text{joint}} = \mathbb{E}_{\left(x^{(1)}, \dots, x^{(M)}, y\right) \sim\mathbb{P}_{\mathcal{D}}} \left[\ell(
            y, \hat{y}
        )\right] 
        = \mathbb{E}_{\left(x^{(1)}, \dots, x^{(M)}, y\right) \sim\mathbb{P}_{\mathcal{D}}} \left[
            -\log{\hat{p}(y|Z)}
        \right],
    \end{split}
    \label{equ:joint_loss}
\end{equation}
where $\ell(\cdot,\cdot)$ is the cross-entropy loss and $Z=(z^{(1)},\dots,z^{(M)})$ is the multimodal feature.
As the distribution of $\hat{y}$ is calculated by $Z$, $l(y, \hat{y})$ in Equation~\ref{equ:joint_loss} can be seen as the conditional cross-entropy under $Z$, denoted as $l(y, \hat{p}|Z)$.
\cite{Theroy_01_CEeqMutual} shows minimizing the conditional cross-entropy $l(y,\hat{p}|Z)$ is equivalent to maximizing the mutual information $I(y;Z)$.
This implies, with joint learning strategies~\cite{MMF-07-OGM-GE, MMF-16-AGM}, the multimodal model is trained by maximizing the mutual information between the multimodal feature $Z$ and the label $y$. 

\begin{wrapfigure}{r}{0.4\textwidth}
    \centering
    \vspace{-1.5em}
    \includegraphics[width=0.35\textwidth]{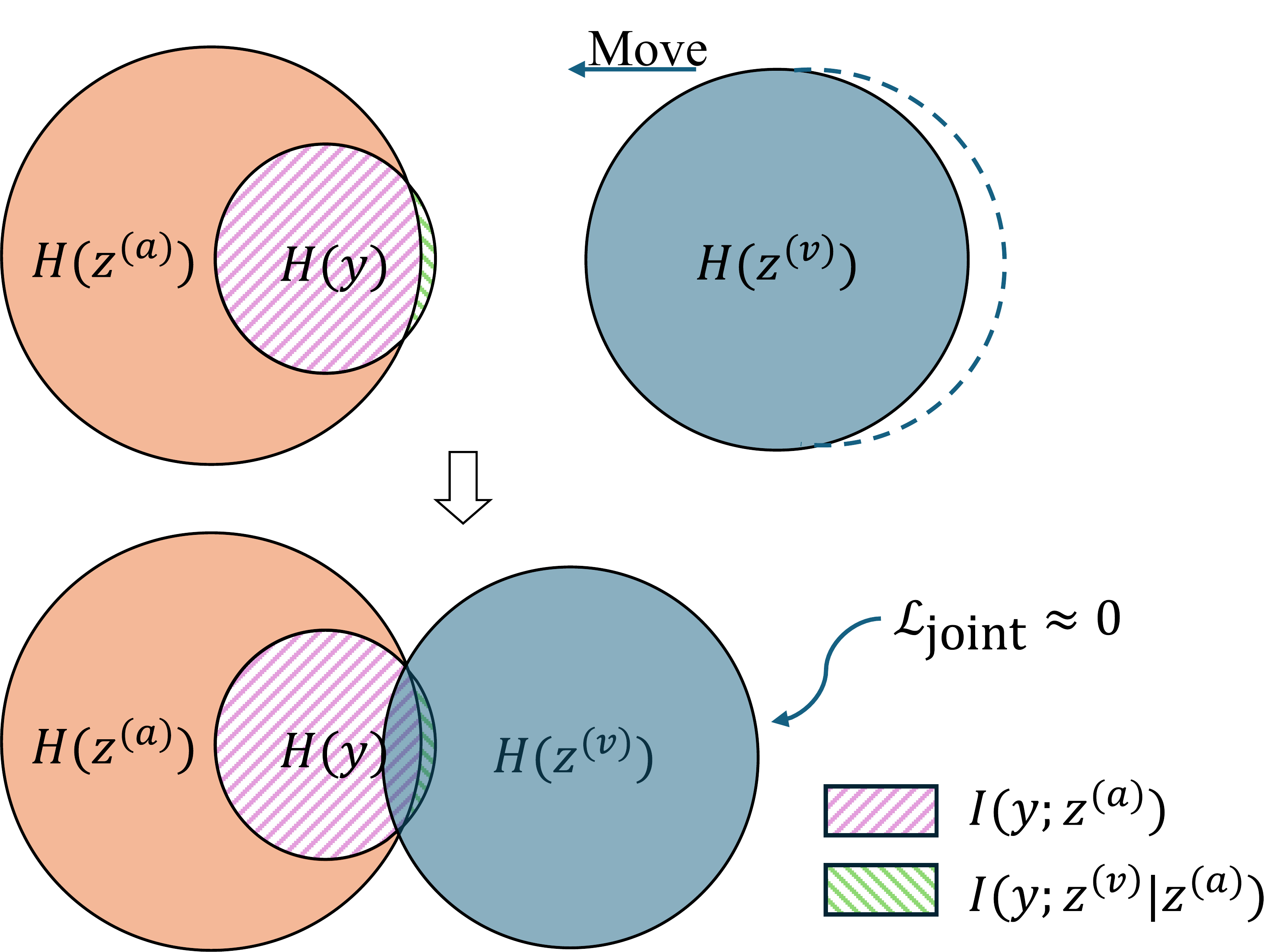}
    \captionsetup{font={footnotesize}}
    \caption{Venn graph of an extreme case where the audio encoder has already been well-fitted. The visual component (blue) only needs to cover $I(y;z^{(v)]}|z^{(a)})$ to achieve the training objective ($\mathcal{L}_{joint}\approx0$), therefore ends up being unfitted.}
    \label{fig:Joint_Venn}
    \vspace{2em}
\end{wrapfigure}
To explain the cause of modality imbalance from an information-theoretic perspective, we analyze the scenario involving two modalities (audio and visual) without loss of generality, where $Z=(z^{(a)}, z^{(v)})$. Specifically, the multimodal model trained via joint learning aims to maximize the mutual information:
\begin{equation}
    \begin{split}
        I(y;Z) = I(y; z^{(a)}, z^{(v)}) = I(y; z^{(a)}) + I(y; z^{(v)} | z^{(a)}).
    \end{split}
    \label{equ:joint_loss_MI}
\end{equation}

The mutual information between any two variables is bounded by the entropy of either variable. Moreover, since $I(y; z^{(v)} | z^{(a)}) \geq 0$, it follows $H(y) \geq I(y; z^{(a)}, z^{(v)}) \geq I(y; z^{(a)})$.
When the encoder of one modality learns faster than the other, assuming $z^{(a)}$ contains sufficient information to predict the label accurately on the train set, then $I(y; z^{(a)})$ in Equation~\ref{equ:joint_loss_MI} becomes close to $H(y)$. 
As $I(y; z^{(v)} | z^{(a)})=I(y; z^{(a)}, z^{(v)})-I(y; z^{(a)}) \leq H(y)-I(y; z^{(a)})$ and $I(y; z^{(a)}) \approx H(y)$, the upper bound of $I(y; z^{(v)} | z^{(a)})$ tends to be quite small, making it challenging for the visual encoder to learn adequate features though maximizing $I(y; z^{(v)} | z^{(a)})$ as it shows in Figure~\ref{fig:Joint_Venn}. 
This phenomenon of resource contention between modalities is termed modality competition~\cite{MMF-06-Modality-competition}.

\begin{figure}[ht]
 \centering
 \includegraphics[scale=0.13]{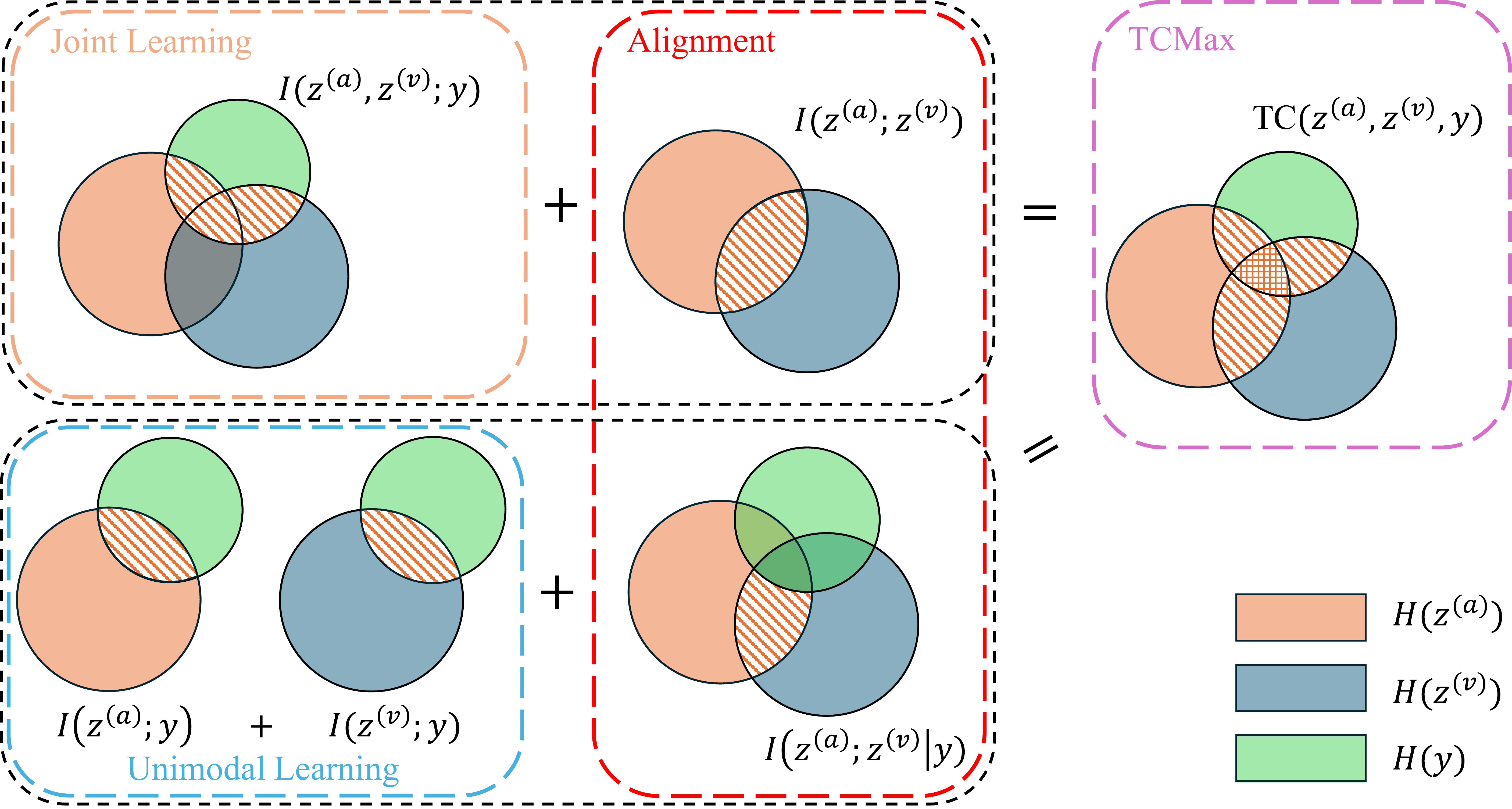}
 \captionsetup{font={footnotesize}}
 \caption{An illustration of the relationship between joint learning, unimodal learning, and learning through maximizing the total correlation. 
 }
 \label{fig:method-comparsion}
\vspace{-1.5em}
\end{figure}
\paragraph{Unimodal Learning.} In unimodal learning, each modality-specific model is trained independently and later combined into an ensemble model.
Specifically, the logits of the unimodal ensemble model during prediction are equal to the sum of all modality-specific models. 
For mathematical consistency, we treat the unimodal ensemble model as a single entity during training.
However, its prediction head can be decomposed into the sum of modality-specific prediction heads, i.e., $f_\theta(Z)= \sum_{m} f^{(m)}_{\theta_m}(z^{(m)})$. 
The optimization objective of unimodal learning is:
\begin{equation}
    \begin{split}
        \mathcal{L}_{\text{unimodal}} =  \mathbb{E}_{\left(x^{(1)}, \dots, x^{(M)}, y\right) \sim\mathbb{P}_{\mathcal{D}}} \left[
            -\sum_{m=1}^M \log{\hat{p}^{(m)}(y|z^{(m)})}
        \right],
    \end{split}
    \label{equ:unimodal_loss}
\end{equation}
where $\hat{p}^{(m)}(y|z^{(m)}) = \text{Softmax}(f^{(m)}_\theta(z^{(m)}))_{y}$ is the predicted distribution of the label of the unimodal model of the $m$-th modality.
During training, the unimodal ensemble model maximizes the mutual information $I(y; z^{(m)})$ separately for each modality $m$.
For multimodal learning with two modalities, it maximizes:
\begin{equation}
    \begin{split}
        I(y; z^{(a)}) + I(y; z^{(v)}).
    \end{split}
    \label{equ:unimodal_loss_MI}
\end{equation}
As mentioned before, we assume the audio modality encoder converges faster, i.e., $z^{(a)}$ captures sufficient information earlier than $z^{(v)}$.
Since $I(y; z^{(v)})$ is independent of $z^{(a)}$, it ensures that during the learning process of the visual modality, sufficient mutual information between features and labels can be learned.
Although unimodal learning avoids modality competition, its independent training paradigm prevents the model from capturing cross-modal interactions.

While joint learning and unimodal learning focus on modality-label relationships $(\mathcal{X}^{(m)} \leftrightarrow \mathcal{Y})$ , multimodal datasets additionally encode cross-modal relationships $(\mathcal{X}^{(i)} \leftrightarrow \mathcal{X}^{(j)})$.
To this end, we aim to fully utilize the prior information embedded in $\mathbb{P}_{\mathcal{X}^{(1)}, \dots, \mathcal{X}^{(M)}, \mathcal{Y}}$. 
While mutual information $I(\xi_1;\xi_2)$ effectively measures pairwise dependencies, its multivariate extension $I(\xi_1;\dots;\xi_n)$ has a limitation: it yields negative values for synergistic interactions.
In contrast, total correlation (TC)~\cite{TC} is non-negative by definition, making it more suitable for measuring multivariate dependencies. 
Formally:
\begin{equation}
    \begin{split}
        \text{TC}(\xi^{(1)}, \xi^{(2)}, \dots, \xi^{(M)}) \equiv& 
        D_{\text{KL}} \left( \mathbb{P}_{\Xi^{(1)},\dots,\Xi^{(M)}} \|  \mathbb{P}_{\Xi^{(1)}} \times \dots \times \mathbb{P}_{\Xi^{(M)}} \right)\\
        =& \left(\sum_{i=1}^M H(\xi^{(i)})\right) - H(\xi^{(1)}, \xi^{(2)}, \dots, \xi^{(M)}),
    \end{split}
    \label{equ:Def_TC}
\end{equation}
where $H$ is the entropy.
To leverage the strengths of both joint and unimodal learning, we propose maximizing the TC across all modalities and the label. As Figure~\ref{fig:method-comparsion} shows, in the case of two modalities, TC can be decomposed as:
\begin{equation}
    \text{TC}(z^{(a)}, z^{(v)}, y) = 
    \left\{ 
        \begin{array}{l}
            \underbrace{I(y; z^{(a)}, z^{(v)})}_{\text{Joint learning}} +  \underbrace{I(z^{(a)}; z^{(v)})}_{\text{Alignment}} \\
            \underbrace{I(y; z^{(a)}) + I(y; z^{(v)})}_{\text{Unimodal learning}} + \underbrace{I(z^{(a)}; z^{(v)} | y)}_{\text{Alignment}}
        \end{array}.
    \right.
    \label{equ:TC_MI}
\end{equation}

This decomposition reveals that TC simultaneously captures: (1) joint modality-label dependencies (joint learning), (2) modality-modality alignment, and (3) unimodal label dependencies (unimodal learning).
This ensures that the model leverages more prior information during training, thereby making the model more robust. In the following sections, we first propose a lower bound estimator for TC, then indirectly maximize TC by optimizing the TCMax loss, which is based on the estimator.

\subsection{Total Correlation Neural Estimation}

To maximize TC, we propose maximizing its lower bound. This requires a reliable estimator for the TC lower bound.
We start from the lower-bound estimator for mutual information. Mutual Information Neural Estimation (MINE)~\cite{MINE} provides a viable approach to estimate a lower bound of mutual information.
\begin{theorem} [MINE~\cite{MINE}]
The mutual information between $Z \in \mathcal{Z}$ and $y \in \mathcal{Y}$ admits the following dual representation:
\begin{equation}
    \begin{split}
        I(Z; y) =  \sup_{T: \mathcal{Z} \times \mathcal{Y} \xrightarrow{} \mathbb{R}} \mathbb{E}_{\mathbb{P}_{\mathcal{Z},\mathcal{Y}}}\left[T\right] - \log \left( \mathbb{E}_{\mathbb{P}_\mathcal{\mathcal{Z}} \times \mathbb{P}_\mathcal{Y}} \left[ e^T \right]\right),
    \end{split}
    \label{MINE_strict}
\end{equation}
where the supremum is taken over all functions $T$ such that the two expectations are finite. As neural networks $T_\theta$ with parameter $\theta \in \Theta$ compose a family of functions which is a subset of $\mathcal{Z} \times \mathcal{Y} \xrightarrow{} \mathbb{R}$, we have:
\begin{equation}
    \begin{split}
        I(Z; y) \geq \sup_{\theta \in \Theta} \mathbb{E}_{\mathbb{P}_{\mathcal{Z},\mathcal{Y}}}\left[T_\theta\right] - \log \left( \mathbb{E}_{\mathbb{P}_\mathcal{Z} \times \mathbb{P}_\mathcal{Y}} \left[ e^{T_\theta} \right] \right).
    \end{split}
    \label{MINE_NN}
\end{equation}
\label{MINE}%
\textup{Fortunately, MINE can be directly extended to Total Correlation Neural Estimation (TCNE). Note that for two variables, TC reduces to mutual information, making MINE a special case of TCNE.}
\end{theorem}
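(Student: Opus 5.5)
The plan is to obtain the dual representation in the theorem from the Donsker--Varadhan variational formula for the Kullback--Leibler divergence. The first step is the standard identity $I(Z;y)=D_{\mathrm{KL}}(\mathbb{P}_{\mathcal{Z},\mathcal{Y}}\,\|\,\mathbb{P}_{\mathcal{Z}}\times\mathbb{P}_{\mathcal{Y}})$, which is also the two-variable case of the definition of TC in Equation~\ref{equ:Def_TC}. With $\mathbb{P}:=\mathbb{P}_{\mathcal{Z},\mathcal{Y}}$ and $\mathbb{Q}:=\mathbb{P}_{\mathcal{Z}}\times\mathbb{P}_{\mathcal{Y}}$, it then suffices to prove
\[
D_{\mathrm{KL}}(\mathbb{P}\|\mathbb{Q})=\sup_{T}\ \mathbb{E}_{\mathbb{P}}[T]-\log\mathbb{E}_{\mathbb{Q}}[e^{T}],
\]
with the supremum over all $T$ for which both expectations are finite. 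We may assume $\mathbb{P}\ll\mathbb{Q}$. If not, both sides are $+\infty$: take $T=c\,\mathbf{1}_A$ for a set $A$ with $\mathbb{Q}(A)=0<\mathbb{P}(A)$ and let $c\to\infty$.

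For the inequality ``$\geq$'', fix an admissible $T$ and define the Gibbs measure $d\mathbb{G}=e^{T}d\mathbb{Q}/\mathbb{E}_{\mathbb{Q}}[e^{T}]$. A direct computation gives
\[
\mathbb{E}_{\mathbb{P}}[T]-\log\mathbb{E}_{\mathbb{Q}}[e^{T}]=\mathbb{E}_{\mathbb{P}}\!\left[\log\tfrac{d\mathbb{G}}{d\mathbb{Q}}\right]=D_{\mathrm{KL}}(\mathbb{P}\|\mathbb{Q})-D_{\mathrm{KL}}(\mathbb{P}\|\mathbb{G}).
\]
Since $D_{\mathrm{KL}}(\mathbb{P}\|\mathbb{G})\geq 0$ by Gibbs' inequality (Jensen applied to $-\log$), every admissible $T$ yields a value at most $D_{\mathrm{KL}}(\mathbb{P}\|\mathbb{Q})$.

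For the inequality ``$\leq$'', take $T^{*}=\log\frac{d\mathbb{P}}{d\mathbb{Q}}$. Then $\mathbb{E}_{\mathbb{Q}}[e^{T^{*}}]=1$, so the objective equals $\mathbb{E}_{\mathbb{P}}[\log\frac{d\mathbb{P}}{d\mathbb{Q}}]=D_{\mathrm{KL}}(\mathbb{P}\|\mathbb{Q})$, and the supremum is attained. This is the main technical obstacle: $T^{*}$ may take the value $-\infty$ where the density vanishes, or fail to be $\mathbb{P}$-integrable. I would handle this with the truncations $T_n=\max(\min(T^{*},n),-n)$, which are bounded and hence admissible. Monotone convergence, applied separately to the positive and negative parts, then shows the objective at $T_n$ converges to $D_{\mathrm{KL}}(\mathbb{P}\|\mathbb{Q})$, including when that value is $+\infty$.

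The second claim, Equation~\ref{MINE_NN}, follows immediately. Each network $T_\theta$ with finite expectations is one admissible function in the supremum of Equation~\ref{MINE_strict}. Hence its objective value is at most $I(Z;y)$, and taking the supremum over $\theta\in\Theta$ preserves the inequality.

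The same argument carries over verbatim to TCNE. One replaces $\mathbb{Q}$ with the product of all marginals $\mathbb{P}_{\Xi^{(1)}}\times\dots\times\mathbb{P}_{\Xi^{(M)}}$, which turns the KL divergence into the total correlation of Equation~\ref{equ:Def_TC}.
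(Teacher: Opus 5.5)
Your proposal is correct and takes essentially the same route as the paper. The paper cites MINE for this statement, and in the appendix proof of Corollary~\ref{TCNE} (of which this is the two-variable case) it writes the objective as $\mathbb{E}_{\mathbb{P}}[\log\frac{\mathrm{d}\mathbb{G}}{\mathrm{d}\mathbb{Q}}]$ via the Gibbs measure, bounds the gap by $D_{\mathrm{KL}}(\mathbb{P}\|\mathbb{G})\ge 0$, asserts tightness at $T^*=\log\frac{\mathrm{d}\mathbb{P}}{\mathrm{d}\mathbb{Q}}+\mathrm{const}$, and restricts the supremum to networks; your treatment of the case where $\mathbb{P}$ is not absolutely continuous with respect to $\mathbb{Q}$ and your truncations $T_n$ make the tightness step rigorous, since the paper never checks that $T^*$ is real-valued with finite expectations.
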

\begin{corollary} [TCNE]
The total correlation among $M+1$ variables $z^{(1)}\in\mathcal{Z}^{(1)},\dots,z^{(M)}\in\mathcal{Z}^{(M)}$ and $y\in\mathcal{Y}$, admits the following dual representation:
\begin{equation}
    \begin{split}
        \text{\textnormal{TC}}(z^{(1)},\dots,z^{(M)}, y) 
        = \sup_{T: \Omega \xrightarrow{} \mathbb{R}} \mathbb{E}_{\mathbb{P}_{\mathcal{Z}^{(1)},\dots,\mathcal{Z}^{(M)}, \mathcal{Y}}} \left[T\right] - \log \left( \mathbb{E}_{\mathbb{P}_{\mathcal{Z}^{(1)}} \times \dots \times \mathbb{P}_{\mathcal{Z}^{(M)}}\times \mathbb{P}_{\mathcal{Y}}} \left[ e^T \right] \right),
    \end{split}
    \label{TCNE_strict}
\end{equation}
where the supremum is taken over all functions $T$ such that the two expectations are finite and $\Omega = \mathcal{Z}_1\times \dots \times \mathcal{Z}_M \times \mathcal{Y}$. As neural networks $T_\theta$ with parameter $\theta \in \Theta$ compose a family of functions which is a subset of $\Omega \xrightarrow{} \mathbb{R}$, we have:
\begin{equation}
    \begin{split}
        \text{\textnormal{TC}}(z^{(1)},\dots,z^{(M)}, y)  
        \geq \sup_{\theta \in \Theta} \mathbb{E}_{\mathbb{P}_{\mathcal{Z}^{(1)},\dots,\mathcal{Z}^{(M)}, \mathcal{Y}}} \left[T_\theta\right]  -  \log \left( \mathbb{E}_{\mathbb{P}_{\mathcal{Z}^{(1)}} \times \dots \times \mathbb{P}_{\mathcal{Z}^{(M)}}\times \mathbb{P}_{\mathcal{Y}}} \left[ e^{T_\theta} \right] \right).
    \end{split}
    \label{TCNE_NN}
\end{equation}

\proof{
    See the supplement material for all proofs of corollaries and propositions.
}

\label{TCNE}
\end{corollary}

\subsection{TCMax Loss}
To align with the form in Corollary~\ref{TCNE}, we set $T_\theta(z^{(1)}, \dots, z^{(M)}, y) = f_\theta(z^{(1)}, \dots, z^{(M)})_y$, decomposing $f_\theta$ into $|\mathcal{Y}|$ functions of the form $Z^{(1)}\times \dots \times Z^{(M)} \rightarrow \mathbb{R}$. Based on Equation~\ref{TCNE_NN}, we propose the TCMax loss:
\begin{equation}
    \begin{split}
        \mathcal{L}_{\text{TCMax}} 
        &= -\mathbb{E}_{\mathbb{P}_{\mathcal{Z}^{(1)},\dots,\mathcal{Z}^{(M)}, \mathcal{Y}}} \left[f_\theta\right]  +  \log\left(\mathbb{E}_{\mathbb{P}_{\mathcal{Z}^{(1)}} \times \dots \times \mathbb{P}_{\mathcal{Z}^{(M)}}\times \mathbb{P}_{\mathcal{Y}}} \left[ e^{f_\theta} \right]\right) \\
        &= -\mathbb{E}_{\mathbb{P}_{\mathcal{X}^{(1)},\dots,\mathcal{X}^{(M)}, \mathcal{Y}}} \left[F_\Theta\right]  +  \log\left(\mathbb{E}_{\mathbb{P}_{\mathcal{X}^{(1)}} \times \dots \times \mathbb{P}_{\mathcal{X}^{(M)}}\times \mathbb{P}_{\mathcal{Y}}} \left[ e^{F_\Theta} \right]\right),
    \end{split}
    \label{equ:TCMax_Loss}
\end{equation}
where $F_\Theta(x^{(1)}, \dots, x^{(M)}, y) = f_\theta\left(\psi^{(1)}_{\Theta_1}(x^{(1)}), \dots, \psi^{(M)}_{\Theta_M}(x^{(M)})\right)_y$ is the multimodal model, $\Theta=\{ \Theta_1,\dots,\Theta_M,\theta\}$ denotes all parameters of the multimodal model.
From the above derivation, we consider the prediction head as a TC estimator on $\mathcal{Z}_1\times \dots \times \mathcal{Z}_M \times \mathcal{Y}$. Similarly, the multimodal model can be regarded as an estimator on $\mathcal{X}_1\times \dots \times \mathcal{X}_M \times \mathcal{Y}$.
Combine with Equation~\ref{TCNE_NN}, we have following proposition:
\begin{proposition} 
The TC between the input data and labels, the TC between features and labels, and our proposed TCMax loss satisfy the following inequality:
\begin{equation}
    \begin{split}
        \text{\textnormal{TC}}( x^{(1)},\dots, x^{(M)}, y) 
        \geq \text{\textnormal{TC}}( z^{(1)},\dots, z^{(M)}, y) \geq
        - \mathcal{L}_{\text{\textnormal{TCMax}}}.
    \end{split}
    \label{equ:TC_chain_inequality}
\end{equation}\label{prop:TC_chain_inequality}%
\textup{Since $\mathcal{L}_{\text{TCMax}} \geq -\text{TC}(z^{(1)}, \dots, z^{(M)}, y)$, minimizing $\mathcal{L}_{\text{TCMax}}$ pushes $- \mathcal{L}_{\text{TCMax}}$ upward, thereby increasing the lower bound of $\text{TC}(z^{(1)}, \dots, z^{(M)}, y)$.
Since the distribution of the dataset is determined, $\text{TC}(x^{(1)}, \dots, x^{(M)}, y)$ is a fixed value and does not vary with model parameters.
So far, we have not addressed the mathematical interpretation of the TCMax-trained model's output. Next, we prove that the output of this model possesses the same capability to predict the label distribution as a multimodal model trained with joint learning.}
\end{proposition}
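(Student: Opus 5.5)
The plan is to prove the two inequalities in Equation~\ref{equ:TC_chain_inequality} separately. The right one follows from Corollary~\ref{TCNE}; the left one is a data-processing inequality for total correlation.

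\textbf{Right inequality.} We apply Equation~\ref{TCNE_NN} with the particular critic $T_\theta(z^{(1)},\dots,z^{(M)},y)=f_\theta(z^{(1)},\dots,z^{(M)})_y$. Here $\theta$ is the current prediction-head parameter, and the encoders $\Theta_1,\dots,\Theta_M$ are held fixed, so they determine the feature distribution.

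The supremum over $\theta$ dominates its value at this one $\theta$. That value is exactly $-\mathcal{L}_{\text{TCMax}}$ by the first line of Equation~\ref{equ:TCMax_Loss}. The only hypothesis to check is that both expectations are finite, which holds when $f_\theta$ is bounded on the support (e.g.\ finite training set). Alternatively, we drop the neural-network restriction and use the dual formula Equation~\ref{TCNE_strict} directly. If Corollary~\ref{TCNE} itself needs justification, it is the Donsker--Varadhan representation of $D_{\text{KL}}(P\|Q)$ with $P$ the joint law and $Q$ the product of marginals, via the KL form in Equation~\ref{equ:Def_TC}.

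\textbf{Left inequality.} Each $z^{(m)}=\psi^{(m)}(x^{(m)})$ is a deterministic measurable map of its own modality, and $y$ is mapped by the identity. Define $\Phi(x^{(1)},\dots,x^{(M)},y)=(\psi^{(1)}(x^{(1)}),\dots,\psi^{(M)}(x^{(M)}),y)$. We use two facts:
\begin{itemize}
\item the pushforward of the joint law $\mathbb{P}_{\mathcal{X}^{(1)},\dots,\mathcal{X}^{(M)},\mathcal{Y}}$ under $\Phi$ is the joint law of $(z^{(1)},\dots,z^{(M)},y)$;
\item because $\Phi$ acts coordinatewise, the pushforward of the product of marginals $\mathbb{P}_{\mathcal{X}^{(1)}}\times\dots\times\mathbb{P}_{\mathcal{Y}}$ is the product of the feature marginals.
\end{itemize}
The KL divergence does not increase under a common measurable map (the data-processing inequality for $f$-divergences). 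By the KL form of Equation~\ref{equ:Def_TC}, this gives $\text{TC}(x^{(1)},\dots,x^{(M)},y)\ge\text{TC}(z^{(1)},\dots,z^{(M)},y)$.

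A self-contained alternative stays within the dual representation. Any critic $T$ on feature space induces the critic $T\circ\Phi$ on input space, and $T\circ\Phi$ attains the same objective value, because both expectations transport under the pushforward. Taking suprema gives the inequality.

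\textbf{Main obstacle.} The delicate step is the second fact above: the product of marginals must be preserved under $\Phi$. This holds only because each encoder sees a single modality. If the encoders shared inputs, the inequality could fail, so the proof should state this coordinatewise structure explicitly. Minor measure-theoretic care (measurability of the $\psi^{(m)}$, finiteness of $\mathbb{E}[e^{T}]$) will be stated as standing assumptions.
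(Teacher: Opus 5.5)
Your proposal is correct and follows the paper's proof. The right inequality comes from plugging the particular critic $f_\theta$ into the Donsker--Varadhan/TCNE bound, and the left inequality is the paper's own argument that critics of the form $T\circ\Psi$ on input space are a subset of all critics there (your ``self-contained alternative''; the data-processing inequality you cite first is the same fact stated as a lemma). Your explicit check that the product of input marginals pushes forward to the product of feature marginals, because each encoder sees only its own modality, is a step the paper uses without stating, so it is worth keeping.
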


\begin{proposition}
    The supremum in in Equation~\ref{TCNE_strict} reaches its upper bound if and only if $\mathbb{P}_{\mathcal{Z}^{(1)},\dots,\mathcal{Z}^{(M)}, \mathcal{Y}} = \mathbb{G} $, where $\mathbb{G}$ is the Gibbs distribution defined as $\mathrm{d} \mathbb{G} = \frac{e^T}{\mathbb{E}_{\mathbb{Q}}\left[e^T\right]} \mathrm{d} \mathbb{Q}$ and $\mathbb{Q} = \mathbb{P}_{\mathcal{Z}^{(1)}} \times \dots \times \mathbb{P}_{\mathcal{Z}^{(M)}}\times \mathbb{P}_{\mathcal{Y}}$.
\label{prop:TCNE_equality}
\end{proposition}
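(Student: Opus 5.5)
We read the statement as follows. For a fixed admissible $T$, the objective $J(T)=\mathbb{E}_{\mathbb{P}}[T]-\log\mathbb{E}_{\mathbb{Q}}[e^T]$ attains its upper bound $\text{TC}(z^{(1)},\dots,z^{(M)},y)=D_{\text{KL}}(\mathbb{P}\|\mathbb{Q})$ if and only if $\mathbb{P}=\mathbb{G}$, where $\mathbb{P}=\mathbb{P}_{\mathcal{Z}^{(1)},\dots,\mathcal{Z}^{(M)},\mathcal{Y}}$. The plan is to follow the Donsker--Varadhan argument underlying Theorem~\ref{MINE} and Corollary~\ref{TCNE}, and to isolate the gap between $J(T)$ and the TC as a single KL divergence.

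\textbf{Step 1: the Gibbs measure is well defined.} Fix $T$ with both expectations finite. Then $0<\mathbb{E}_{\mathbb{Q}}[e^T]<\infty$, so $\mathbb{G}$ defined by $\mathrm{d}\mathbb{G}=e^T/\mathbb{E}_{\mathbb{Q}}[e^T]\,\mathrm{d}\mathbb{Q}$ is a probability measure. It is mutually absolutely continuous with $\mathbb{Q}$, since its density $e^T$ is strictly positive.

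\textbf{Step 2: the gap identity.} We may assume $\mathbb{P}\ll\mathbb{Q}$; otherwise the TC is $+\infty$, $J(T)$ is finite, and $\mathbb{P}\neq\mathbb{G}$ because $\mathbb{G}\sim\mathbb{Q}$, so both sides of the ``if and only if'' are false. By the chain rule for Radon--Nikodym derivatives,
\[
\log\frac{\mathrm{d}\mathbb{P}}{\mathrm{d}\mathbb{Q}}=\log\frac{\mathrm{d}\mathbb{P}}{\mathrm{d}\mathbb{G}}+T-\log\mathbb{E}_{\mathbb{Q}}[e^T].
\]
Taking expectations under $\mathbb{P}$ gives
\[
\text{TC}(z^{(1)},\dots,z^{(M)},y)=D_{\text{KL}}(\mathbb{P}\|\mathbb{G})+J(T).
\]
Since $D_{\text{KL}}\ge 0$, this identity reproves the bound of Corollary~\ref{TCNE}. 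It also shows that equality $J(T)=\text{TC}$ holds exactly when $D_{\text{KL}}(\mathbb{P}\|\mathbb{G})=0$.

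\textbf{Step 3: characterizing equality.} By Gibbs' inequality, $D_{\text{KL}}(\mathbb{P}\|\mathbb{G})=0$ if and only if $\mathbb{P}=\mathbb{G}$, which gives both directions at once. Conversely, when $\mathbb{P}=\mathbb{G}$, the optimizer satisfies $T=\log\frac{\mathrm{d}\mathbb{P}}{\mathrm{d}\mathbb{Q}}+\text{const}$ $\mathbb{Q}$-a.s. This is the form used later to interpret the model output as an estimate of the joint distribution.

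\textbf{Main obstacle: the measure-theoretic bookkeeping in Step 2.} Two things need care. First, we must justify splitting the expectation, i.e.\ that $\mathbb{E}_{\mathbb{P}}[T]$ is finite (assumed) and that the remaining term is a well-defined KL divergence, possibly $+\infty$. Second, if TC is infinite the identity degenerates to $\infty=\infty$. We handle this by working in the finite-TC regime, which holds for the finite-label setting whenever the features carry finite information, and by remarking that if TC is infinite no finite $J(T)$ attains it, while $\mathbb{P}=\mathbb{G}$ would force $\mathbb{P}\ll\mathbb{Q}$ with $\log\frac{\mathrm{d}\mathbb{P}}{\mathrm{d}\mathbb{Q}}$ integrable under $\mathbb{P}$, i.e.\ a finite TC.
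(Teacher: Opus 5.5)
Your proof is correct and follows the same route as the paper. Both write the gap $\text{TC}-J(T)$ as $D_{\text{KL}}(\mathbb{P}\|\mathbb{G})$ using the Gibbs reweighting of $\mathbb{Q}$, then conclude with Gibbs' inequality; your extra bookkeeping (well-definedness of $\mathbb{G}$, the case $\mathbb{P}\not\ll\mathbb{Q}$, infinite TC) is sound and cleaner than the paper's write-up, which has a sign slip in $\Delta$ and states the equality case as $\mathbb{P}=\mathbb{Q}$ instead of $\mathbb{P}=\mathbb{G}$.
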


Proposition~\ref{prop:TCNE_equality} indicates that when the TC estimator in TCNE is accurate, the estimator can also accurately estimate the joint probability distribution of all variables.

\begin{proposition}
    The two inequalities in Equation~\ref{equ:TC_chain_inequality} simultaneously hold as equalities if and only if  $\mathbb{P}_{\mathcal{X}^{(1)},\dots,\mathcal{X}^{(M)}, \mathcal{Y}} = \hat{\mathbb{G}}$, where $\hat{\mathbb{G}}$ is the Gibbs distribution defined as $\mathrm{d} \hat{\mathbb{G}} = \frac{e^{F_\Theta}}{\mathbb{E}_{\mathbb{Q}}\left[e^{F_\Theta}\right]} \mathrm{d} \mathbb{Q}$ and $\mathbb{Q} = \mathbb{P}_{\mathcal{X}^{(1)}} \times \dots \times \mathbb{P}_{\mathcal{X}^{(M)}}\times \mathbb{P}_{\mathcal{Y}}$.
\label{TCMax_equality}
\end{proposition}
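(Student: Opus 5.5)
The plan is to handle the two inequalities of Proposition~\ref{prop:TC_chain_inequality} separately, reducing each to a Donsker--Varadhan/Gibbs argument or a data-processing argument, and then to combine them. A convenient route is to view $F_\Theta$ as a single critic on $\Omega_X=\mathcal{X}^{(1)}\times\dots\times\mathcal{X}^{(M)}\times\mathcal{Y}$. With $\mathbb{P}=\mathbb{P}_{\mathcal{X}^{(1)},\dots,\mathcal{X}^{(M)},\mathcal{Y}}$ and $\mathbb{Q}$ the product of marginals, the second line of equation~\ref{equ:TCMax_Loss} gives
\begin{equation*}
\mathrm{TC}(x^{(1)},\dots,x^{(M)},y)+\mathcal{L}_{\mathrm{TCMax}}
= D_{\mathrm{KL}}(\mathbb{P}\|\mathbb{Q})-\mathbb{E}_{\mathbb{P}}[F_\Theta]+\log\mathbb{E}_{\mathbb{Q}}[e^{F_\Theta}]
= D_{\mathrm{KL}}(\mathbb{P}\|\hat{\mathbb{G}}).
\end{equation*}
This identity follows by writing $\log\frac{d\mathbb{P}}{d\mathbb{Q}}=\log\frac{d\mathbb{P}}{d\hat{\mathbb{G}}}+F_\Theta-\log\mathbb{E}_{\mathbb{Q}}[e^{F_\Theta}]$ and integrating against $\mathbb{P}$. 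It is exactly the computation underlying Proposition~\ref{prop:TCNE_equality}, now carried out on the input space rather than the feature space. The total gap in the chain of equation~\ref{equ:TC_chain_inequality} is therefore $D_{\mathrm{KL}}(\mathbb{P}\|\hat{\mathbb{G}})$.

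Since the total gap is the sum of two nonnegative gaps, both inequalities are equalities if and only if the total gap vanishes. The total gap vanishes if and only if $D_{\mathrm{KL}}(\mathbb{P}\|\hat{\mathbb{G}})=0$, which holds if and only if $\mathbb{P}=\hat{\mathbb{G}}$. For this to work, I must first confirm that each individual gap is nonnegative.
\begin{itemize}
\item The second gap, $\mathrm{TC}(z,y)+\mathcal{L}_{\mathrm{TCMax}}\ge 0$, is Corollary~\ref{TCNE} with $T=f_\theta$ on the feature space.
\item The first gap, $\mathrm{TC}(x,y)-\mathrm{TC}(z,y)\ge 0$, is the data-processing inequality for KL divergence. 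The map $\Phi=(\psi^{(1)}_{\Theta_1},\dots,\psi^{(M)}_{\Theta_M},\mathrm{id})$ pushes $\mathbb{P}$ to the joint law of $(z^{(1)},\dots,z^{(M)},y)$. Because $\Phi$ acts coordinatewise, it also pushes the product $\mathbb{Q}$ to the product of the feature marginals.
\end{itemize}

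The step I expect to be the main obstacle is making the identity rigorous. It requires absolute continuity of $\mathbb{P}$ with respect to $\mathbb{Q}$ (otherwise TC is infinite), finiteness of $\mathbb{E}_{\mathbb{Q}}[e^{F_\Theta}]$ and $\mathbb{E}_{\mathbb{P}}[F_\Theta]$, and the consistency check that the expectation of $F_\Theta$ under the input-level laws equals that of $f_\theta$ under the pushed-forward feature-level laws. This last point is the change of variables already implicit in the two lines of equation~\ref{equ:TCMax_Loss}.

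I would state these standing assumptions explicitly. I would then verify the forward direction directly: if $\mathbb{P}=\hat{\mathbb{G}}$, then $d\mathbb{P}/d\mathbb{Q}=e^{F_\Theta}/\mathbb{E}_{\mathbb{Q}}[e^{F_\Theta}]$. Plugging this into the definition of TC gives $\mathrm{TC}(x,y)=-\mathcal{L}_{\mathrm{TCMax}}$, and the sandwich forces the middle term to coincide with both ends. Alternatively, the forward direction can be read off by combining Proposition~\ref{prop:TCNE_equality} with the equality case of data processing, namely that $F_\Theta$ factors through $\Phi$. This gives $\log d\mathbb{P}/d\mathbb{Q}$ measurable with respect to $\sigma(\Phi)$.
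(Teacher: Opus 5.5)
Your proposal is correct and follows essentially the same route as the paper. The paper also collapses the chain to $\mathrm{TC}(x^{(1)},\dots,x^{(M)},y) = -\mathcal{L}_{\mathrm{TCMax}}$ and then applies the Gibbs-gap argument of Proposition~\ref{prop:TCNE_equality} on the input space with critic $F_\Theta$, so that tightness is equivalent to $D_{\mathrm{KL}}(\mathbb{P}\|\hat{\mathbb{G}})=0$; your decomposition of the total gap into two nonnegative gaps (the first via data processing rather than the paper's function-class inclusion) just makes the converse direction and the finiteness assumptions more explicit.
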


\paragraph{No Modifications when Predicting.} Proposition~\ref{TCMax_equality} demonstrates that the lower bound of $\mathcal{L}_{\text{TCMax}}$ is $-\text{TC}(x^{(1)}, \dots, x^{(M)}, y)$, and when this lower bound is achieved, we have
\begin{equation}
    \begin{split}
        p(y|x^{(1)},\dots, x^{(M)}) = \frac{p(y, x^{(1)},\dots, x^{(M)})}{\sum_{k\in \mathcal{Y}}p(k, x^{(1)},\dots, x^{(M)})} 
        = \frac{e^{F_\Theta(x^{(1)},\dots,x^{(M)}, y)}}{\sum_{k\in \mathcal{Y}} e^{F_\Theta(x^{(1)},\dots,x^{(M)}, k)}} 
        = \hat{p}_y,
    \end{split}
    \label{equ:prob_equality}
\end{equation}
where $p$ is the probability mass function of data distribution $\mathbb{P}_{\mathcal{D}}$. 
Equation~\ref{equ:prob_equality} indicates that the model trained using the TCMax loss does not require additional operations or modifications to the model structure during prediction. The only difference between our proposed method and joint learning in practice is replacing $\mathcal{L}_{\text{joint}}$ with $\mathcal{L}_{\text{TCMax}}$ during training, yet it yields more robust results.

\subsection{Computational Cost}
When training a multimodal model that includes both audio and visual modalities, a direct implementation of $\mathcal{L}_{\text{TCMax}}$ in a mini-batch $\mathcal{B}$ is:
\begin{equation}
    \begin{split}
        \mathcal{L}_{\text{TCMax}} 
        &= -\frac{1}{\lvert \mathcal{B} \rvert} \sum_{i\in \mathcal{B}} \log{
            \frac{
                \exp{f_\theta\left(\psi^{(a)}_{\Theta_a}(x^{(a)}_i), \psi^{(v)}_{\Theta_v}(x^{(v)}_i)\right)}_{y_i}
            }
            {
                \sum_{(j,k,y^{'})\in \mathcal{B}\times\mathcal{B}\times \mathcal{Y}}  \exp{f_\theta\left(\psi^{(a)}_{\Theta_a}(x^{(a)}_j), \psi^{(v)}_{\Theta_v}(x^{(v)}_k)\right)}_{y^{'}}
            }
        }-\log{\lvert \mathcal{B} \rvert^2\lvert \mathcal{Y} \rvert}
    \end{split},
    \label{equ:TCMax_Loss_minibatch}
\end{equation}
where $x_i^{(m)}$ denotes the $i$-th sample of $m$-th modality.
Notice that using $\mathcal{L}_{\text{TCMax}} $ requires forwarding the prediction head $\lvert \mathcal{B} \rvert^{M}$ times. Although the parameter count of the prediction head is generally much smaller compared to the encoders, for a large number of modalities $M$ and a large batch size, this can still introduce significant additional computational overhead. To mitigate this overhead, the computation can be optimized by sampling only a certain number of negative samples (denominator) in the feature space, i.e., randomly sampling $\mathcal{N} \subset \mathcal{B} \times \mathcal{B}$, where each $(i, j) \in \mathcal{N}$ is sampled uniformly without replacement from $\mathcal{B} \times \mathcal{B}$. $\mathcal{L}_{\text{TCMax}}$ with sampling is:
\begin{equation}
    \begin{split}
        \mathcal{L}_{\text{TCMax}} 
        &= -\frac{1}{\lvert \mathcal{B} \rvert} \sum_{i\in \mathcal{B}} \log{
            \frac{
                \exp{f_\theta\left(\psi^{(a)}_{\Theta_a}(x^{(a)}_i), \psi^{(v)}_{\Theta_v}(x^{(v)}_i)\right)}_{y_i}
            }
            {
                \sum_{(j,k)\in \mathcal{N}} \sum_{y^{'}\in \mathcal{Y}} \exp{f_\theta\left(\psi^{(a)}_{\Theta_a}(x^{(a)}_j), \psi^{(v)}_{\Theta_v}(x^{(v)}_k)\right)}_{y^{'}}
            }
        }-\log{\lvert \mathcal{N} \rvert\lvert \mathcal{Y} \rvert}
    \end{split}.
    \label{equ:TCMax_Loss_sample}
\end{equation}
For linear fusion $f_\theta(z^{(a)}, z^{(v)}) = f_{\theta_a}^{(a)}(z^{(a)}) + f_{\theta_v}^{(v)}(z^{(v)})$, the denominator decouples into separate sums over modalities due to the identity $\exp(a+b)=\exp(a)\exp(b)$.
$\mathcal{L}_{\text{TCMax}}$ becomes:
\begin{equation}
    \begin{split}
        \mathcal{L}_{\text{TCMax}} 
        = -\frac{1}{\lvert \mathcal{B} \rvert} \sum_{i\in \mathcal{B}} \log{
            \frac{
                \exp{f_{\theta_a}^{(a)}(\psi^{(a)}_{\Theta_a}(x^{(a)}_i))}_{y_i} \exp{f_{\theta_v}^{(v)}(\psi^{(v)}_{\Theta_v}(x^{(v)}_i))}_{y_i}
            }
            {
                 \sum_{y^{'}\in \mathcal{Y}} 
                 \left( \sum_{j\in \mathcal{B}}\exp{f_{\theta_a}^{(a)}(\psi^{(a)}_{\Theta_a}(x^{(a)}_j))}_{y^{'}}\right)
                 \left( \sum_{k\in \mathcal{B}}\exp{f_{\theta_v}^{(v)}(\psi^{(v)}_{\Theta_v}(x^{(v)}_k))}_{y^{'}}\right)
            }
        }&
        \\
        -\log{\lvert \mathcal{B} \rvert^2\lvert \mathcal{Y} \rvert}.&
    \end{split}
    \label{equ:TCMax_Loss_minibatch_concat}
\end{equation}
In this way, only $\lvert \mathcal{B} \rvert$ forward passes of the prediction head are required, introducing almost no additional computational overhead with $\mathcal{L}_{\text{TCMax}}$.
\section{Experiments}
\label{sec:Experiments}
\vspace{-0.5em}
\subsection{Datasets}





\textbf{CREMA-D}~\cite{Dataset-02-Cremad} encompasses $7,442$ audio-visual clips from $91$ actors expressing six emotions, with emotion labels determined by $2,443$ crowd-sourced raters. 
\textbf{Kinetics-Sounds}~(KS)~\cite{Dataset-03-KS}, a subset of Kinetics~\cite{Dataset-04-Kinetics} dataset, includes $19,000$ $10s$ videos across $31$ human action labels, annotated manually through Mechanical Turk.
\textbf{AVE}~\cite{Dataset-01-AVE} focuses on localizing audio-visual events in $4,143$ $10s$ videos across $28$ labels, sourced from YouTube with frame-level labeling for both audio and visual components.
\textbf{VGGSound}~\cite{Dataset-05-VGGSound} is a large dataset of $309$ labels, with $10s$ videos that exhibit clear audio-visual correlations. It contains $152,638$ training videos and $13,294$ testing videos.
\textbf{UCF101}~\cite{Dataset-06-UCF101} comprises $13,320$ videos from $101$ action labels. The clips, ranging from $3s$ to $10s$, are split into a training set with $9,537$ clips and a testing set with $3,783$ clips, utilizing the official train-test split.
\textbf{MVSA}~\cite{Dataset-07-MVSA} (MVSA-Single) is a multimodal sentiment analysis dataset that jointly leverages text and image data for classification.

\subsection{Implementation Details}
\label{subsec:exp_settings}

\paragraph{Backbone and Hyperparameter} 
For all audio-visual datasets, we follow the same setting as in the previous study~\cite{MMF-07-OGM-GE}, selecting ResNet-18~\cite{Resnet} as the encoder for both audio and visual modalities, and training it from scratch. For the audio modality, inputs are converted into spectrograms~\cite{librosa} of size $129\!\times\!862$ for the CREMA-D, AVE, and VGGSound datasets, and fbank~\cite{fbank} acoustic features for the Kinetics-Sounds dataset. For the visual modality, we extract images from videos at 1 fps and use 3 images as input for the CREMA-D, Kinetics-Sounds, and AVE datasets, and four images as input for the VGGSound dataset. For the UCF101 dataset, we extract frames and optical flow data from the videos at $1$ fps, using $3$ RGB frames and $10$ optical flow frames for each sample. We use ResNet-18 as the backbone for both RGB and optical flow modalities and train ResNet-18 from scratch as their backbone. We utilize SGD~\cite{SGD} with $0.9$ momentum and $1e\!-\!4$ weight decay as the optimizer for all experiments. We set (learning rate, mini-batch size, epochs) to ($1e\!-\!3$, $32$, $200$) for CREMA-D and AVE, ($1e\!-\!3$, $64$, $200$) for Kinetics-Sounds, ($1e\!-\!3$, $64$, $100$) for VGGSound, and ($1e\!-\!3$, $32$, $400$) for UCF101. All of our experiments were performed on one NVIDIA Tesla V100 GPU.

\subsection{Comparison with State-of-the-Arts}
\definecolor{Ocean}{RGB}{230,255,255}
\begin{table*}[h]
\centering
\caption{Results of the average test accuracy(\%) of three random seeds on CREMA-D, Kinetics-Sounds, AVE, VGGSound, and UCF101 datasets. Both the results of only using a single modality ("Audio" and "Visual") and the results of combining all modalities ("Multi") are listed. The best results and second best results are \textbf{bold} and \underline{underlined}, respectively.}
\resizebox{1 \linewidth}{!}{
\setlength{\tabcolsep}{1mm}{
\begin{tabular}{l|ccc|ccc|ccc|ccc|ccc}
\toprule
\multirow{2}{*}{\large Methods}  & \multicolumn{3}{c|}{CREMA-D} & \multicolumn{3}{c|}{Kinetics-Sounds} & \multicolumn{3}{c|}{AVE} &\multicolumn{3}{c|}{VGGSound} &\multicolumn{3}{c}{UCF101} \\
& Audio & Visual & \cellcolor{Ocean}\textbf{Multi} & Audio & Visual & \cellcolor{Ocean}\textbf{Multi} & Audio & Visual & \cellcolor{Ocean}\textbf{Multi} & Audio & Visual & \cellcolor{Ocean}\textbf{Multi} & RGB & OF & \cellcolor{Ocean} \textbf{Multi}\\\cmidrule{1-16}
Concat 
& {$58.6$} & {$26.5$} & \cellcolor{Ocean}{$68.5$} 
& {$36.8$} & {$22.2$} & \cellcolor{Ocean}{$53.5$}
& {$48.9$} & {$17.2$} & \cellcolor{Ocean}{$60.4$} 
& $29.5$ & $10.3$ &  \cellcolor{Ocean}$41.1$ 
& {$27.5$} & {$15.8$} & \cellcolor{Ocean}{$46.1$}\\

Share Head 
& {$63.0$} & {$29.5$} & \cellcolor{Ocean}{$69.3$} 
& {$38.7$} & {$29.1$} & \cellcolor{Ocean}{$53.7$}
& {$49.7$} & {$22.4$} & \cellcolor{Ocean}{$61.4$} 
& $30.9$ & $13.0$ &  \cellcolor{Ocean}$41.8$ 
& {$31.2$} & {$20.2$} & \cellcolor{Ocean}{$46.0$}\\

FiLM~\cite{MMF-11-FiLM} 
& - & - & \cellcolor{Ocean}{$65.3$} 
& - & - & \cellcolor{Ocean}{$52.6$}
& - & - &  \cellcolor{Ocean}{$58.8$} 
& - & - &  \cellcolor{Ocean}$38.3$ 
& - & - &  \cellcolor{Ocean}{$44.7$} \\

BiGated~\cite{MMF-13-Gated} 
& - & - & \cellcolor{Ocean}{$64.7$}
& - & - & \cellcolor{Ocean}{$49.1$}
& - & - &  \cellcolor{Ocean}{$59.3$} 
& - & - &  \cellcolor{Ocean}$38.3$ 
& - & - &  \cellcolor{Ocean}{$46.7$}\\ \cmidrule{1-16}

OGM-GE~\cite{MMF-07-OGM-GE} 
& {$55.8$} & {$45.8$} & \cellcolor{Ocean}{$75.2$} 
& {$35.7$} & {$25.6$} & \cellcolor{Ocean}{$55.4$}
& {$39.5$} & {$19.3$} & \cellcolor{Ocean}{$61.3$} 
& $28.7$ & $13.2$ &  \cellcolor{Ocean}$42.5$ 
& {$23.3$} & {$16.6$} & \cellcolor{Ocean}{$44.2$}\\

AGM~\cite{MMF-16-AGM} 
& {$61.4$} & {$39.0$} & \cellcolor{Ocean}{$71.2$} 
& {$36.4$} & {$29.4$} & \cellcolor{Ocean}{$57.7$}
& {$44.2$} & {$19.3$} & \cellcolor{Ocean}{$61.4$} 
& $30.5$ & $14.7$ &  \cellcolor{Ocean}$44.7$ 
& {$27.2$} & {$17.6$} & \cellcolor{Ocean}{$46.0$}\\ \cmidrule{1-16}

Unimodal Ensemble 
& {$62.9$} & {$74.9$} & \cellcolor{Ocean}{$82.1$} 
& {$43.0$} & {\underline{$45.8$}} & \cellcolor{Ocean}{$62.3$}
& {$54.1$} & {$36.7$} & \cellcolor{Ocean}{$62.9$} 
& $34.8$ & \underline{$26.7$} &  \cellcolor{Ocean}$46.3$ 
& {$39.2$} & {$29.2$} & \cellcolor{Ocean}{$51.6$} \\

QMF~\cite{MMF-04-QMF} 
& {$\textbf{65.5}$} & {$74.3$} & \cellcolor{Ocean}{$81.4$} 
& {\underline{$44.6$}} & {$44.2$} & \cellcolor{Ocean}{$62.1$}
& {$\textbf{55.1}$} & {$37.0$} & \cellcolor{Ocean}{$\textbf{65.1}$} 
& $34.6$ & $26.1$ & \cellcolor{Ocean}$45.8$ 
& {$39.2$} & {$31.2$} & \cellcolor{Ocean}{$52.1$}\\

OPM~\cite{MMF-15-OPM} 
& {$61.3$} & {$67.2$} & \cellcolor{Ocean}{$79.4$} 
& {$39.1$} & {$44.1$} & \cellcolor{Ocean}{$61.7$}
& {$48.8$} & {$36.2$} & \cellcolor{Ocean}{$63.1$} 
& $31.2$ & $23.5$ &  \cellcolor{Ocean}$45.9$
& {$38.6$} & {$26.6$} & \cellcolor{Ocean}{$50.8$}\\

OGM-GE + OPM~\cite{MMF-15-OPM} 
& {$58.3$} & {$69.8$} & \cellcolor{Ocean}{$80.3$} 
& {$37.6$} & {$44.1$} & \cellcolor{Ocean}{$62.5$}
& {$48.2$} & {$35.3$} & \cellcolor{Ocean}{$63.3$} 
& $31.6$ & $23.5$ &  \cellcolor{Ocean}$45.5$ 
& {$38.6$} & {$26.9$} & \cellcolor{Ocean}{$48.3$}\\

MLA~\cite{MMF-10-MLA} 
& {$63.5$} & {\underline{$75.6$}} & \cellcolor{Ocean}{$81.0$} 
& {$41.6$} & {$44.9$} & \cellcolor{Ocean}{$61.1$}
& {$53.2$} & {\underline{$37.9$}} & \cellcolor{Ocean}{$62.6$} 
& $35.3$ & $25.8$ &  \cellcolor{Ocean}$44.5$ 
& {$40.1$} & {$31.0$} & \cellcolor{Ocean}{$51.2$}\\

MMPareto~\cite{MMF-17-Pareto} 
& \underline{$65.4$} & {$\textbf{75.7}$} & \cellcolor{Ocean}{$74.4$} 
& {$\textbf{44.8}$} & {$\textbf{49.4}$} & \cellcolor{Ocean}{\underline{$62.7$}}
& {$54.0$} & {$\textbf{41.1}$} & \cellcolor{Ocean}{$63.1$} 
& \underline{$35.5$} & \textbf{27.5} & \cellcolor{Ocean}$46.2$ 
& {$\textbf{42.4}$} & {\underline{$33.0$}} & \cellcolor{Ocean}{$\underline{55.9}$}\\ \cmidrule{1-16}

\textbf{Ours}${}_{\text{\scriptsize Concat}}$ 
& {$63.7$} & {$75.0$} & \cellcolor{Ocean}{$\textbf{82.8}$} 
& {$41.0$} & {$41.4$} & \cellcolor{Ocean}{$62.4$}
& {$53.8$} & {$33.5$} & \cellcolor{Ocean}{$63.2$} 
& $33.9$ & $22.0$ &  \cellcolor{Ocean}$\textbf{47.6}$  
& {$37.7$} & {$32.5$} & \cellcolor{Ocean}{$55.4$}\\

\textbf{Ours}${}_{\text{\scriptsize Share Head}}$ 
& {$63.0$} & {$73.6$} & \cellcolor{Ocean}{\underline{$82.7$}} 
& {$43.4$} & {$43.3$} & \cellcolor{Ocean}{$\textbf{63.5}$}
& {\underline{$54.2$}} & {$36.7$} & \cellcolor{Ocean}{\underline{$64.5$}} 
& \textbf{36.3} & $24.3$ &  \cellcolor{Ocean}\underline{$47.5$} 
& {\underline{$41.0$}} & {$\textbf{37.2}$} & \cellcolor{Ocean}{$\textbf{56.0}$}\\

\bottomrule
\end{tabular}}}
\label{tab:banckmark}
\end{table*}

\begin{table*}[htp]
\vspace{-0.5em}
\centering
\scriptsize
\renewcommand{\arraystretch}{0.8}
\setlength{\tabcolsep}{1.5pt}
\renewcommand{\arraystretch}{1}
\caption{Results of Jensen–Shannon divergence between predictions of two modalities on CREMA-D, Kinetics-Sounds, AVE, VGGSound, and UCF101 datasets. The minima and second minima results are \textbf{bold} and \underline{underlined}, respectively.}
\vspace{-0.5em}
\begin{tabular}{c|ccccccccc|cc}
\toprule
Dataset & Concat & Share Head &  OGM-GE & AGM & Unimodal & QMF & OPM & MLA & MMPareto & \textbf{Ours}${}_{\text{\scriptsize Concat}}$ & \textbf{Ours}${}_{\text{\scriptsize Share Head}}$ \\ \cmidrule{1-12}
CREMA-D & $0.478$ & $0.490$ & $0.518$ & $0.400$ & $0.312$ & $0.293$ & $0.337$ & $0.306$ & $0.314$ & \underline{$0.284$} & \textbf{0.271} \\
Kinetics-Sounds & $0.543$ & $0.560$ & $0.551$ & $0.499$ & $0.459$ & $0.455$ & $0.448$ & $0.466$ & $0.440$ & \underline{$0.423$} & \textbf{0.390} \\
AVE & $0.562$ & $0.568$ & $0.560$ & $0.540$ & $0.462$ & $0.471$ & $0.470$ & $0.468$ & $0.465$ & \underline{$0.452$} & \textbf{0.406} \\
VGGSound & $0.620$ & $0.631$ & $0.610$ & $0.594$ & $0.526$ & $0.528$ & $0.513$ & $0.528$ & $0.539$ & \underline{$0.513$} & \textbf{0.473} \\
UCF101 & $0.584$ & $0.579$ & $0.589$ & $0.586$ & $0.463$ & $0.531$ & $0.473$ & $0.457$ & $0.485$ & \underline{$0.427$} &\textbf{0.366}  \\
\bottomrule
\end{tabular}
\vspace{-1em}
\label{tab:Jensen–Shannon divergence}
\end{table*}

\paragraph{Compared methods} We conduct comprehensive comparisons of \ours\ with several baselines and recent studies. (1) Baselines: concatenation~(Concat), share predicted head~(Share Head), unimodal fusion~(Unimodal); (2) Recent studies: FiLM~\cite{MMF-11-FiLM}, BiGated~\cite{MMF-13-Gated}, OGM-GE~\cite{MMF-07-OGM-GE}, OPM~\cite{MMF-15-OPM}, QMF~\cite{MMF-04-QMF}, AGM~\cite{MMF-16-AGM}, MLA~\cite{MMF-10-MLA}, MMPareto~\cite{MMF-17-Pareto}.
\vspace{-0.5em}

\subsubsection{Results of Test Accuracy}

Table~\ref{tab:banckmark} presents testing accuracy of using a single modality and combining modalities~(\textit{i.e.}, Multi). 
\textit{First}, joint learning shows severe modality imbalance, with one modality significantly underperforming, and generally yields worse results than alternatives.
\textit{Second}, balanced joint learning methods (e.g., OGM-GE, AGM) improve the weaker modality but slightly degrade the stronger one, failing to consistently surpass unimodal-loss methods.
\textit{Third}, unimodal-based methods prevent overfitting and sometimes outperform pure unimodal learning, while achieving the highest single-modal accuracy.
\textit{Finally}, \ours\ achieves the best multimodal scores, but its single-modality performance matches other unimodal-based methods—suggesting its gains stem from cross-modal synergy rather than individual improvements.

\subsubsection{Results of Jensen–Shannon divergence}

To investigate the correlation of prediction outcomes across modalities, we calculate the average Jensen–Shannon divergence~(JS-divergence) between the prediction results of two separate modalities in Table~\ref{tab:Jensen–Shannon divergence}. This metric quantifies the degree of correlation in predictions between the two modalities, with a lower JS-divergence signifying a stronger correlation. As shown in the table, our \ours\ consistently achieves the smallest JS-divergence for the single-modal predictions across all datasets, indicating that \ours\, which is rooted in contrastive learning, facilitates the model to learn cross-modal representations, thereby enhancing the correlation of predictions.

\subsection{Further Analysis}

\begin{wrapfigure}{hr}{0.55\textwidth}
\vspace{-1.5em}
    \begin{minipage}[t]{\linewidth}
        \centering
        \begin{subfigure}[t]{0.47\linewidth}
            \includegraphics[width=\linewidth]{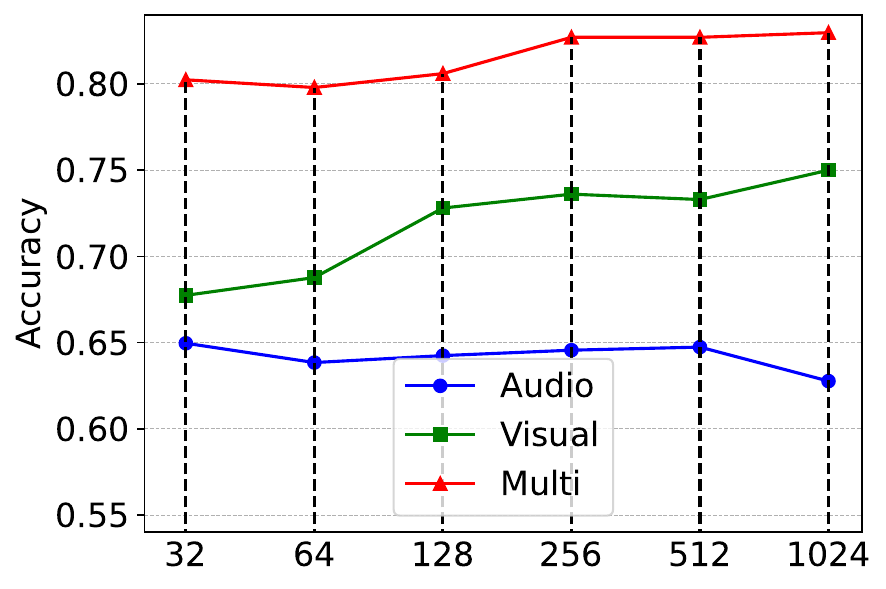}
            \captionsetup{font={footnotesize}}
            \caption{CREMA-D, Batchsize 64}
            \label{fig:TCMAX_sample_num_CREMAD}
        \end{subfigure}
        \begin{subfigure}[t]{0.47\linewidth}
            \includegraphics[width=\linewidth]{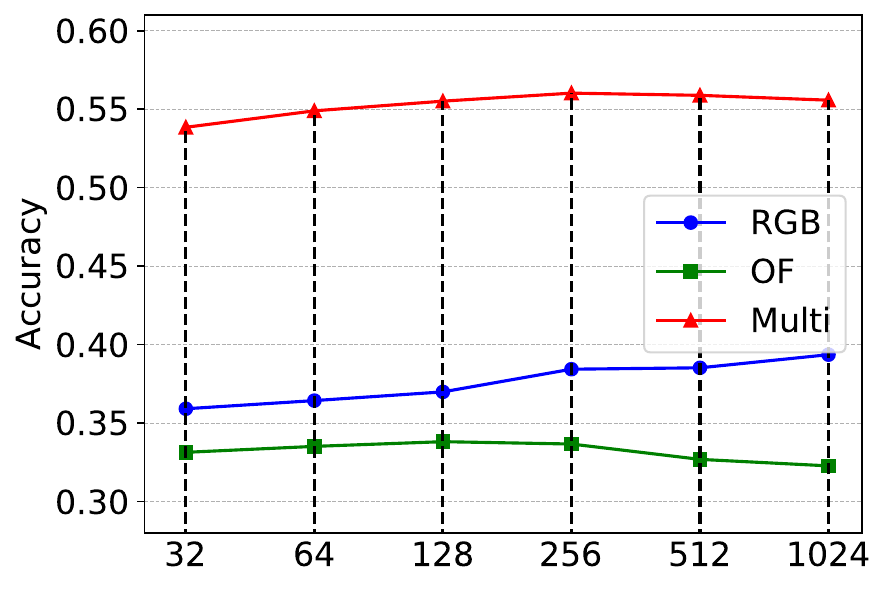}
            \captionsetup{font={footnotesize}}
            \caption{UCF101, Batchsize 32}
            \label{fig:JPM_sample_num_UCF101}
        \end{subfigure}
        \vspace{-0.5em}
        \captionsetup{font={footnotesize}}
        \caption{Accuracy on different numbers of sampled negative pairs.}
        \label{fig:TCMAX_sample_num}
        \vspace{-1em}
    \end{minipage}
\end{wrapfigure}

\paragraph{Number of Sampled Negative Pairs.} 
We first explore the effect of sampling different numbers of negative pairs, as mentioned in Equation~\ref{equ:TCMax_Loss_sample}. Figure~\ref{fig:TCMAX_sample_num_CREMAD} shows that, for the CREMA-D dataset, the accuracy of \ours\ rises with an increase in the number of negative pair samplings, achieving optimal performance at $1024$. 
In contrast, on the UCF101 dataset, the best performance is observed at the maximum sampling number $256$.

\begin{figure}[htbp]
\centering
\begin{minipage}[t]{0.44\textwidth}
        \centering
        \begin{subfigure}[t]{0.45\linewidth}
            \includegraphics[width=\linewidth]{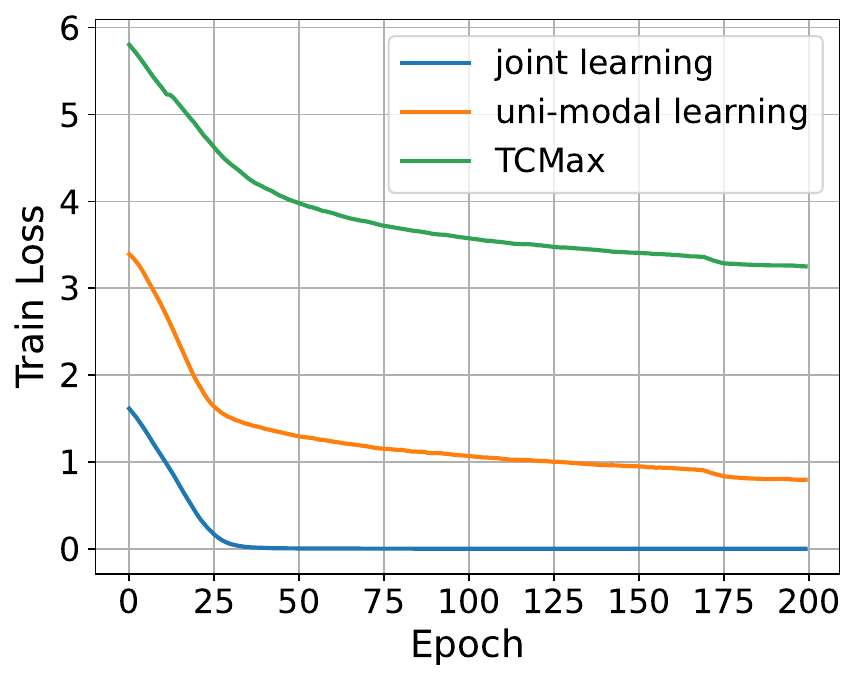}
            \caption{CREMA-D Loss}
            \label{fig:CREMAD_Overfitting_loss_right}
        \end{subfigure}
        \begin{subfigure}[t]{0.4675\linewidth}
            \includegraphics[width=\linewidth]{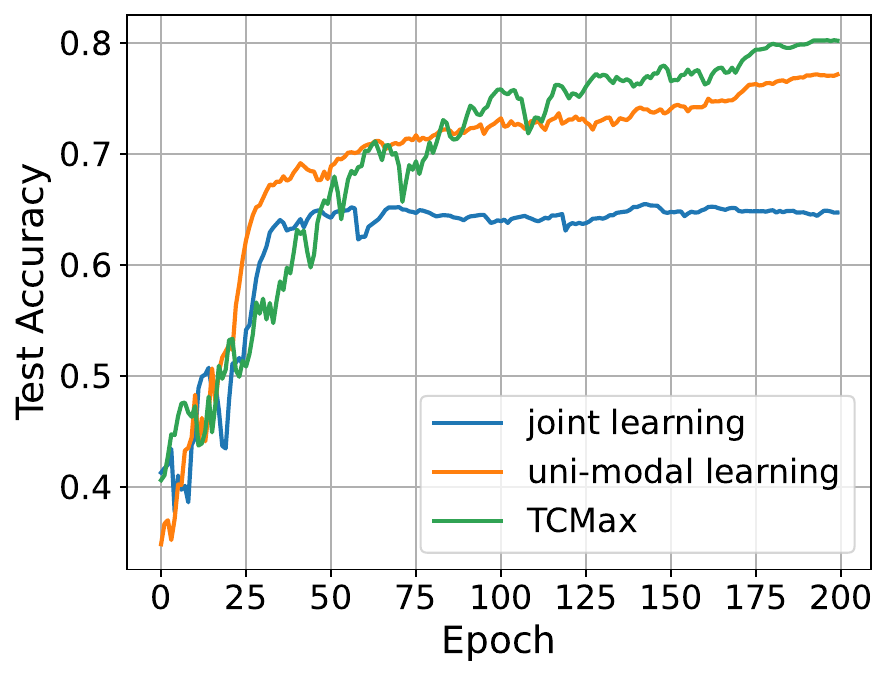}
            \caption{CREMA-D Acc}
            \label{fig:CREMAD_Overfitting_acc_right}
        \end{subfigure}
        \vspace{-0.5em}
        \begin{subfigure}[t]{0.45\linewidth}
            \includegraphics[width=\linewidth]{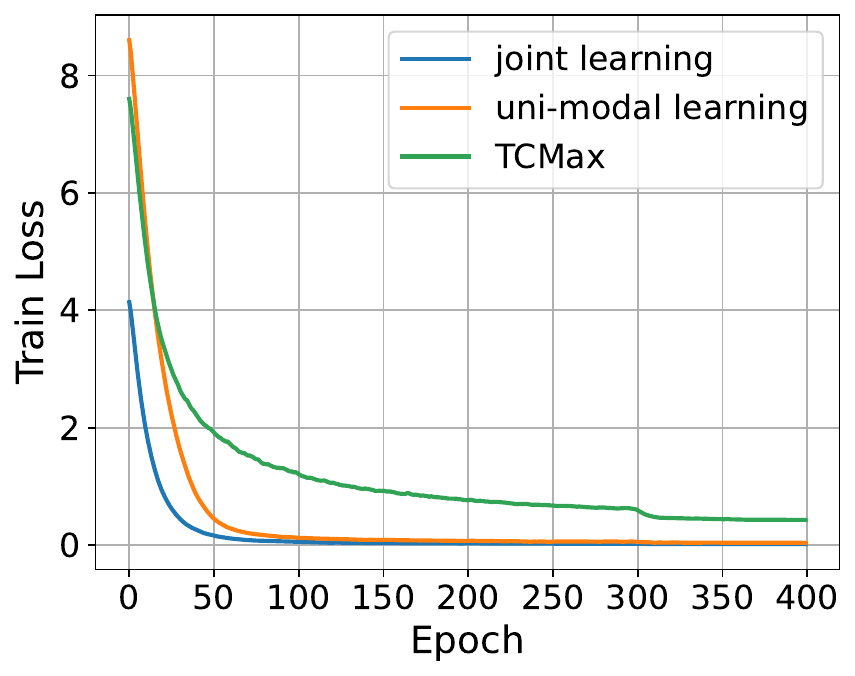}
            \caption{UCF101 Loss}
            \label{fig:UCF101_Overfitting_loss_right}
        \end{subfigure}
        \begin{subfigure}[t]{0.4675\linewidth}
            \includegraphics[width=\linewidth]{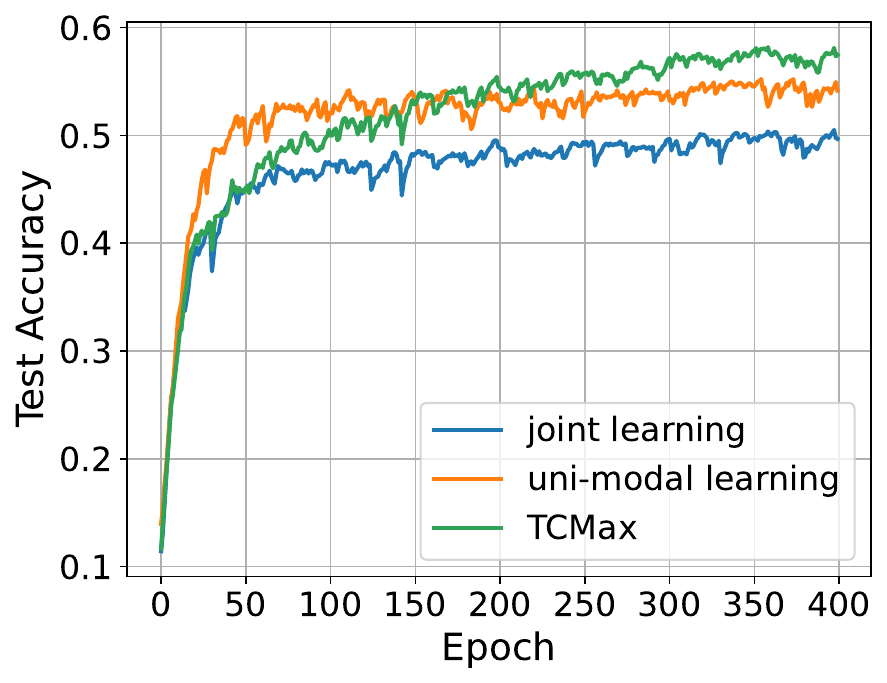}
            \caption{UCF101 Acc}
            \label{fig:UCF101_Overfitting_acc_right}
        \end{subfigure}
        \caption{Train loss and test accuracy of joint learning, unimodal learning, and TCMax on CREMA-D and UCF101 datasets.}
        \label{fig:Overfitting_right}
\end{minipage}
\hfill
\begin{minipage}[t]{0.52\textwidth}
\centering
\small
\captionsetup{font={footnotesize}}
\renewcommand{\arraystretch}{1.2} 
\setlength{\tabcolsep}{0.8pt} 
\begin{tabular}{cc|cc|c|c}
\toprule
& & {\footnotesize Concat} & {\footnotesize Share Head} & {\footnotesize Unimodal} & {\footnotesize \textbf{Ours}}${}_{\text{\scriptsize Concat}}$ \\ \cmidrule{1-6}
\multirow{3}{*}[-0.2ex]{\rotatebox{90}{\scriptsize CREMA-D}} & $H_{(A)}$ & $0.369$ & $0.184$ & $0.320$ & $0.575$  \\
&$H_{(V)}$ & $1.076$ & $1.229$ & $0.746$ & $0.890$  \\
&$\rho$ & $2.913$ & $6.674$ & \underline{$2.331$} & \textbf{1.549} \\ \cmidrule{1-6}
\multirow{3}{*}[-0.2ex]{\rotatebox{90}{\scriptsize UCF101}} &$H_{(RGB)}$ & $1.245$ & $0.630$ & $0.921$ & $2.000$ \\
&$H_{(OF)}$ & $2.244$ & $1.170$ & $1.259$ & $2.265$ \\
&$\rho$ & $1.802$ & $1.856$ & \underline{$1.368$} & \textbf{1.132} \\
\bottomrule
\end{tabular}
\captionof{table}{Results of average entropy of predictions by single modality on test sets of CREMA-D and UCF101 datasets. $H_{(M)}$ denotes the entropy of predictions of the `M' modality, and $\rho$ represents the ratio of the entropy of the weak modality to the entropy of the strong modality. For CREMA-D and UCF101 datasets, $\rho=H_{(V)}/H_{(A)}$ and $\rho=H_{(OF)}/H_{(RGB)}$, respectively.}
\label{tab:entropy_of_modalities}
\end{minipage}
\vspace{-1.5em}
\end{figure}

\paragraph{TCMax Prevents Overfitting.} Here, we visualize how \ours\ effectively mitigates the risk of overfitting. As depicted in Figure~\ref{fig:Overfitting_right}, on both the CREMA-D and UCF101 datasets, \ours\ loss remains consistently higher than that for joint and unimodal learning, which prevents the model parameters from updating at all. Although \ours\ exhibits inferior performance compared to unimodal learning in the early stages of training, as the training progresses into the middle stage, \ours\ begins to gradually presents its strengths and ultimately converges to a stable performance level.

\vspace{-1em}

\paragraph{Average Entropy of Predictions.} In Table~\ref{tab:entropy_of_modalities}, we compute the average entropy predictions by single modality and the ratio between strong and weak modalities. This reflects the model’s ability to balance predictions across different modalities. Typically, a lower ratio indicates a more equitable contribution from both modalities. The table reveals that our method successfully achieves a balanced representation of the various modalities.


\begin{wraptable}{r}{6.5cm}
\vspace{-1.5em}
\centering
\small
\caption{Result of the average test accuracy(\%) of 10 random seeds on MVSA with frozen CLIP pretrained encoders.}
\renewcommand{\arraystretch}{0.9} 
\setlength{\tabcolsep}{1pt} 
\begin{tabular}{l|ccc|ccc}
\toprule[1pt]
\multirow{2}{*}{\textbf{Methods}} & \multicolumn{3}{c|}{RN50} & \multicolumn{3}{c}{ViT-B/32} \\
& Image & Text & \textbf{Multi} & Image & Text & \textbf{Multi} \\
\midrule[0.4pt]
Joint
& {75.76} & {73.60} & {81.23} 
& {76.88} & {74.27} & {82.83}  \\

Unimodal 
& {\textbf{76.74}} & {\textbf{77.16}} & {80.02} 
& {\textbf{78.54}} & {\textbf{76.97}} & {81.77}  \\

TCMax 
& {75.38} & {74.97} & {\textbf{81.75}} 
& {78.03} & {76.55} & {\textbf{84.05}}  \\

\bottomrule[1pt]
\end{tabular}
\label{tab:CLIP_MVSA}
\vspace{-1em}
\end{wraptable}

\vspace{-1em}

\paragraph{Analysis with Pretrained Encoders}
As shown in Table~\ref{tab:CLIP_MVSA}, we adopt CLIP~\cite{CLIP} as the frozen feature encoder for both image and text modalities on the MVSA dataset. 
During training, only the multimodal classifier is optimized while keeping the encoder parameters fixed. 
Our results show that:
(1) Joint learning outperforms unimodal learning because the limited parameter space prevents overfitting;
(2) \ours\ maintains competitive performance by effectively modeling cross-modal interactions, similar to joint learning.
\vspace{-1em}


\section{Discussion}
\label{sec:Discussion}

\vspace{-1em}
\paragraph{Conclusion} This study investigates the causes of modality imbalance in multimodal classification tasks from an information-theoretic perspective and proposes a learning objective to maximize total correlation to fully utilize cross-modal information in multimodal datasets. 
We propose the Total Correlation Neural Estimation(TCNE), which employs neural networks to estimate the lower bound of total correlation. 
Building on this theoretical foundation, we introduce a parameter-free loss function, TCMax, for multimodal classification tasks. 
By maximizing total correlation, our approach enables more comprehensive utilization of prior information in multimodal datasets, thereby achieving enhanced robustness. 
Comparative experiments with state-of-the-art methods demonstrate the effectiveness of our approach across multiple multimodal classification benchmarks.

\vspace{-1em}
\paragraph{Limitation} The current \ours\ framework is primarily designed for classification tasks and cannot be directly extended to other multimodal applications such as multimodal object detection or generative tasks. Successful adaptation to these domains would require explicit definitions of input-output probability distributions. Furthermore, while TCMax establishes a foundational multimodal learning paradigm, its full potential depends on developing model architectures specifically optimized for this framework.

\section*{Acknowledgements}
This work is supported by the NSFC (62276131,  62506168), Natural Science Foundation of Jiangsu Province of China under Grant (BK20240081, BK20251431).

The authors gratefully acknowledge financial support from the China Scholarship Council~(CSC) (Grant No. 202506840036).

\bibliography{ref}
\bibliographystyle{iclr2026_conference}

\newpage
\appendix
\section{Proofs}
\label{sec:Proofs}
\subsection{Proof of Corollary~\ref{TCNE}}

\begin{corollary} [Corollary~\ref{TCNE} restated, TCNE]
The total correlation between $M+1$ variables $z^{(1)}\in\mathcal{Z}^{(1)},\dots,z^{(M)}\in\mathcal{Z}^{(M)}$ and $y\in\mathcal{Y}$, admits the following dual representation:
\begin{equation}
    \begin{split}
        \text{\textnormal{TC}}(z^{(1)},\dots,z^{(M)}, y) 
        = \sup_{T: \Omega \xrightarrow{} \mathbb{R}} \mathbb{E}_{\mathbb{P}_{\mathcal{Z}^{(1)},\dots,\mathcal{Z}^{(M)}, \mathcal{Y}}} \left[T\right] - \log \left( \mathbb{E}_{\mathbb{P}_{\mathcal{Z}^{(1)}} \times \dots \times \mathbb{P}_{\mathcal{Z}^{(M)}}\times \mathbb{P}_{\mathcal{Y}}} \left[ e^T \right] \right)
    \end{split}
    \label{TCNE_strict_appendix}
\end{equation}
where the supremum is taken over all functions $T$ such that the two expectations are finite and $\Omega = \mathcal{Z}_1\times \dots \times \mathcal{Z}_M \times \mathcal{Y}$. As neural networks $T_\theta$ with parameter $\theta \in \Theta$ composed a family of functions which is a subset of $\Omega \xrightarrow{} \mathbb{R}$, we have:
\begin{equation}
    \begin{split}
        \text{\textnormal{TC}}(z^{(1)},\dots,z^{(M)}, y)  
        \geq \sup_{\theta \in \Theta} \mathbb{E}_{\mathbb{P}_{\mathcal{Z}^{(1)},\dots,\mathcal{Z}^{(M)}, \mathcal{Y}}} \left[T_\theta\right]  -  \log \left( \mathbb{E}_{\mathbb{P}_{\mathcal{Z}^{(1)}} \times \dots \times \mathbb{P}_{\mathcal{Z}^{(M)}}\times \mathbb{P}_{\mathcal{Y}}} \left[ e^{T_\theta} \right] \right)
    \end{split}
    \label{TCNE_NN_appendix}
\end{equation}
\label{TCNE_appendix}%
\textup{We follow the proof in the paper of MINE~\cite{MINE} to prove TCNE. First, we begin with the Donsker-Varadhan representation theorem.}
\end{corollary}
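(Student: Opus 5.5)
We reduce the corollary to the Donsker--Varadhan (DV) representation of the KL divergence, which is exactly what MINE uses. Write $\mathbb{P} = \mathbb{P}_{\mathcal{Z}^{(1)},\dots,\mathcal{Z}^{(M)},\mathcal{Y}}$ and $\mathbb{Q} = \mathbb{P}_{\mathcal{Z}^{(1)}}\times\dots\times\mathbb{P}_{\mathcal{Z}^{(M)}}\times\mathbb{P}_{\mathcal{Y}}$, both on $\Omega$.

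By the definition of total correlation in Equation~\ref{equ:Def_TC}, applied to the $M+1$ variables $z^{(1)},\dots,z^{(M)},y$, we have $\text{TC}(z^{(1)},\dots,z^{(M)},y) = D_{\text{KL}}(\mathbb{P}\,\|\,\mathbb{Q})$. The statement is then the DV identity
\[
D_{\text{KL}}(\mathbb{P}\|\mathbb{Q}) = \sup_{T:\Omega\to\mathbb{R}} \mathbb{E}_{\mathbb{P}}[T] - \log \mathbb{E}_{\mathbb{Q}}[e^T],
\]
specialized to this particular pair $(\mathbb{P},\mathbb{Q})$. Nothing in the DV argument uses the product structure of $\mathbb{Q}$, so the extension from two to $M+1$ variables costs nothing.

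To keep the proof self-contained, I would reprove DV in two directions.

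\textbf{Upper bound.} Fix any $T$ for which both expectations are finite, and define the Gibbs measure $d\mathbb{G} = e^T\,d\mathbb{Q}/\mathbb{E}_{\mathbb{Q}}[e^T]$. Then
\[
\mathbb{E}_{\mathbb{P}}[T] - \log\mathbb{E}_{\mathbb{Q}}[e^T] = \mathbb{E}_{\mathbb{P}}\!\left[\log \tfrac{d\mathbb{G}}{d\mathbb{Q}}\right],
\]
and consequently
\[
D_{\text{KL}}(\mathbb{P}\|\mathbb{Q}) - \bigl(\mathbb{E}_{\mathbb{P}}[T]-\log\mathbb{E}_{\mathbb{Q}}[e^T]\bigr) = D_{\text{KL}}(\mathbb{P}\|\mathbb{G}) \ge 0,
\]
where the inequality is Gibbs' inequality. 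This requires $\mathbb{P}\ll\mathbb{Q}$. If that fails, the TC is $+\infty$ and the identity must instead be shown by a sequence of $T$ attaining arbitrarily large values.

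\textbf{Attainment.} When $\mathbb{P}\ll\mathbb{Q}$, take $T^* = \log \frac{d\mathbb{P}}{d\mathbb{Q}}$. Then $\mathbb{E}_{\mathbb{Q}}[e^{T^*}]=1$ and $\mathbb{E}_{\mathbb{P}}[T^*] = D_{\text{KL}}(\mathbb{P}\|\mathbb{Q})$, so the supremum is attained. If $T^*$ fails to be finite-valued, or its expectation is infinite, I would use the truncations $T_n = \max(\min(T^*,n),-n)$ and pass to the limit by monotone convergence.

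The neural-network inequality is then immediate. The family $\{T_\theta:\theta\in\Theta\}$ is a subset of the admissible functions, provided we restrict to those $\theta$ for which the expectations are finite. A supremum over a subset is at most the supremum over the whole set, which is the TC.

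I expect the main obstacle to be measure-theoretic care rather than anything conceptual: handling the case $\mathbb{P}\not\ll\mathbb{Q}$, where TC is infinite, and making the truncation limit rigorous when $T^*$ is unbounded. For the non-absolutely-continuous case, I would first choose a set $A$ with $\mathbb{Q}(A)=0<\mathbb{P}(A)$ and use $T = c\,\mathbf{1}_A$ with $c\to\infty$, which drives the objective to $+\infty$.
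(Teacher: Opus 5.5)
Your proposal is correct and follows essentially the same route as the paper: identify TC with $D_{\text{KL}}(\mathbb{P}\|\mathbb{Q})$ for the joint law versus the product of marginals, and prove the Donsker--Varadhan bound via the Gibbs measure so that the gap equals $D_{\text{KL}}(\mathbb{P}\|\mathbb{G})\ge 0$. You then note tightness at $T^*=\log\frac{d\mathbb{P}}{d\mathbb{Q}}$ and finish with the supremum-over-a-subset argument for $T_\theta$. Your handling of $\mathbb{P}\not\ll\mathbb{Q}$ and of unbounded $T^*$ via truncation goes beyond what the paper does. One small correction: $T_n$ is not monotone in $n$, so the limit $\mathbb{E}_{\mathbb{Q}}[e^{T_n}]\to 1$ should use dominated convergence with the bound $\frac{d\mathbb{P}}{d\mathbb{Q}}+1$, while monotone convergence applies to the positive and negative parts of $T_n$ separately under $\mathbb{P}$.
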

\begin{theorem} (The Donsker-Varadhan representation~\cite{Donsker-Varadhan}) 
The KL divergence admits the following dual representation:
\begin{equation}
    D_{\text{KL}} \left( \mathbb{P} \| \mathbb{Q} \right) = \sup_{T:\Omega \xrightarrow{} \mathbb{R}} \mathbb{E}_{\mathbb{P}}[T] - \log \left( \mathbb{E}_{\mathbb{Q}}[e^T] \right),
    \label{DV_representation}
           \end{equation}
where the supremum is taken over all functions $T$ such that the two expectations are finite. 
\end{theorem}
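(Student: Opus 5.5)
The statement to prove is the Donsker--Varadhan representation: $D_{\text{KL}}(\mathbb{P}\|\mathbb{Q}) = \sup_T \mathbb{E}_{\mathbb{P}}[T] - \log \mathbb{E}_{\mathbb{Q}}[e^T]$. The argument, as in the MINE paper, has two halves: first show that every admissible $T$ gives a value at most $D_{\text{KL}}(\mathbb{P}\|\mathbb{Q})$, then exhibit a $T$ (or a sequence of them) attaining the divergence. Throughout, assume $\mathbb{P}\ll\mathbb{Q}$. Otherwise $D_{\text{KL}}=\infty$, and one shows the supremum is also infinite by taking $T = c\,\mathbf{1}_A$ on a set $A$ with $\mathbb{Q}(A)=0<\mathbb{P}(A)$ and letting $c\to\infty$. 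Indeed, the objective then equals $c\,\mathbb{P}(A) - \log 1 \to \infty$.

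\textbf{Upper bound.} Fix $T$ with both expectations finite. Define the Gibbs measure $\mathrm{d}\mathbb{G} = \frac{e^T}{\mathbb{E}_{\mathbb{Q}}[e^T]}\,\mathrm{d}\mathbb{Q}$, which is well defined and equivalent to $\mathbb{Q}$ because $0<\mathbb{E}_{\mathbb{Q}}[e^T]<\infty$. Then $\log\frac{\mathrm{d}\mathbb{G}}{\mathrm{d}\mathbb{Q}} = T - \log\mathbb{E}_{\mathbb{Q}}[e^T]$, so
\begin{equation*}
\mathbb{E}_{\mathbb{P}}[T] - \log\mathbb{E}_{\mathbb{Q}}[e^T] = \mathbb{E}_{\mathbb{P}}\left[\log\frac{\mathrm{d}\mathbb{G}}{\mathrm{d}\mathbb{Q}}\right].
\end{equation*}
Subtracting this from $D_{\text{KL}}(\mathbb{P}\|\mathbb{Q}) = \mathbb{E}_{\mathbb{P}}\left[\log\frac{\mathrm{d}\mathbb{P}}{\mathrm{d}\mathbb{Q}}\right]$ and using the chain rule for Radon--Nikodym derivatives gives
\begin{equation*}
D_{\text{KL}}(\mathbb{P}\|\mathbb{Q}) - \left(\mathbb{E}_{\mathbb{P}}[T] - \log\mathbb{E}_{\mathbb{Q}}[e^T]\right) = \mathbb{E}_{\mathbb{P}}\left[\log\frac{\mathrm{d}\mathbb{P}}{\mathrm{d}\mathbb{G}}\right] = D_{\text{KL}}(\mathbb{P}\|\mathbb{G}) \geq 0,
\end{equation*}
where nonnegativity is Gibbs' inequality. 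This shows the supremum is at most the divergence. It also identifies the equality case $\mathbb{P}=\mathbb{G}$, which is exactly what Proposition~\ref{prop:TCNE_equality} needs later.

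\textbf{Attainment.} When $D_{\text{KL}}(\mathbb{P}\|\mathbb{Q})<\infty$, take $T^* = \log\frac{\mathrm{d}\mathbb{P}}{\mathrm{d}\mathbb{Q}}$. Then $\mathbb{E}_{\mathbb{Q}}[e^{T^*}] = 1$, and $\mathbb{E}_{\mathbb{P}}[T^*] = D_{\text{KL}}(\mathbb{P}\|\mathbb{Q})$, so the bound is achieved. The main obstacle is integrability: $T^*$ can equal $-\infty$ on the set where $\mathrm{d}\mathbb{P}/\mathrm{d}\mathbb{Q}=0$, and $\mathbb{E}_{\mathbb{P}}[T^*]$ could be ill-defined if $D_{\text{KL}}=\infty$. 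I would handle this with truncations $T_n = \max(\min(T^*, n), -n)$ and pass to the limit. On $\{T^*\ge0\}$ use monotone convergence for $\mathbb{E}_{\mathbb{P}}[T_n]$. On $\{T^*<0\}$ use dominated convergence for $\mathbb{E}_{\mathbb{Q}}[e^{T_n}]$, noting that $e^{T_n}\le \max(e^{T^*},1)$, which is $\mathbb{Q}$-integrable. Together these show the objective at $T_n$ tends to $D_{\text{KL}}(\mathbb{P}\|\mathbb{Q})$, including the infinite case, so the supremum equals the divergence.
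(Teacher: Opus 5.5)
Your proposal is correct and is essentially the paper's argument: introduce the Gibbs measure $\mathrm{d}\mathbb{G}=e^T\,\mathrm{d}\mathbb{Q}/\mathbb{E}_{\mathbb{Q}}[e^T]$, show the gap equals $D_{\text{KL}}(\mathbb{P}\|\mathbb{G})\ge 0$, and get tightness at $T^*=\log\frac{\mathrm{d}\mathbb{P}}{\mathrm{d}\mathbb{Q}}$ (up to an additive constant). Your additions beyond the paper are sound: the case $\mathbb{P}\not\ll\mathbb{Q}$, and the truncations $T_n$ that keep the attaining function real-valued. One step in the truncation needs an extra remark: $\mathbb{E}_{\mathbb{P}}\bigl[(T^*)^-\bigr]\le 1/e$ (from $x\log x\ge -1/e$), which is what makes $\mathbb{E}_{\mathbb{P}}[T_n]$ converge on $\{T^*<0\}$ even when $D_{\text{KL}}=\infty$.
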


For a given function $T$, consider the Gibbs distribution defined by $\mathrm{d}\mathbb{G} = \frac{1}{Z} e^T \mathrm{d}\mathbb{Q}$, where $Z = \mathbb{E}_{\mathbb{Q}}[e^T]$ is he partition function and $T$ servers as the energy function in the Gibbs distribution. The right hand of Equation~\ref{DV_representation} can be written as:
\begin{equation}
    \mathbb{E}_{\mathbb{P}}[T] - \log \left( \mathbb{E}_{\mathbb{Q}}[e^T] \right) 
    = \mathbb{E}_{\mathbb{P}}\left[ T \right] - \log Z 
    = \mathbb{E}_{\mathbb{P}} \left[ \log \frac{e^T}{Z} \right] 
    = \mathbb{E}_{\mathbb{P}} \left[ \log \frac{\mathrm{d}\mathbb{G}}{\mathrm{d}\mathbb{Q}} \right].
    \label{DV_rep_right_hand}
\end{equation}
Let $\Delta$ be the gap:
\begin{equation}
    \Delta \equiv D_{\text{KL}} \left( \mathbb{P} \| \mathbb{Q} \right) - \mathbb{E}_{\mathbb{P}}[T] - \log \left( \mathbb{E}_{\mathbb{Q}}[e^T] \right),
    \label{DV_rep_gap}
\end{equation}
with Equation~\ref{DV_rep_right_hand}, we can write $\Delta$ as KL-divergence:
\begin{equation}
    \Delta = \mathbb{E}_{\mathbb{P}}\left[ \log \frac{\mathrm{d}\mathbb{P}}{\mathrm{d}\mathbb{Q}} - \log \frac{\mathrm{d}\mathbb{G}}{\mathrm{d}\mathbb{Q}} \right]
        = \mathbb{E}_{\mathbb{P}}\left[ \log \frac{\mathrm{d}\mathbb{P}}{\mathrm{d}\mathbb{G}}\right] = D_{\text{KL}} \left( \mathbb{P} \| \mathbb{G} \right).
    \label{DV_rep_gap_KL}
\end{equation}
The positivity of the KL-divergence gives $\Delta \geq 0$. We have thus shown that for any $T$,
\begin{equation}
    D_{\text{KL}} \left( \mathbb{P} \| \mathbb{Q} \right) \geq \mathbb{E}_{\mathbb{P}}[T] - \log \left( \mathbb{E}_{\mathbb{Q}}[e^T] \right)
    \label{DV_rep_final},
\end{equation}
The inequality is preserved upon taking the supremum over the right-hand side. The bound is tight when $\mathbb{G} = \mathbb{P}$, namely for optimal functions $T^*$ taking over the form $T^*=\log \frac{\mathrm{d} \mathbb{P}}{\mathrm{d} \mathbb{Q}}+Const$ for some constant $Const \in \mathbb{R}$.

To prove Equation~\ref{TCNE_strict_appendix} in Corollary~\ref{TCNE_appendix}, we replace $\mathbb{P}$ and $\mathbb{Q}$ with $\mathbb{P}_{\mathcal{Z}^{(1)},\dots,\mathcal{Z}^{(M)}, \mathcal{Y}}$ and $\mathbb{P}_{\mathcal{Z}^{(1)}} \times \dots \times \mathbb{P}_{\mathcal{Z}^{(M)}}\times \mathbb{P}_{\mathcal{Y}}$ in Equation~\ref{DV_representation} so we have:
\begin{equation}
    \begin{split}
         D_{\text{KL}} & \left( \mathbb{P}_{\mathcal{Z}^{(1)},\dots,\mathcal{Z}^{(M)}, \mathcal{Y}} \|  \mathbb{P}_{\mathcal{Z}^{(1)}} \times \dots \times \mathbb{P}_{\mathcal{Z}^{(M)}}\times \mathbb{P}_{\mathcal{Y}} \right) = \\ &\sup_{T:\Omega \xrightarrow{} \mathbb{R}} \mathbb{E}_{\mathbb{P}_{\mathcal{Z}^{(1)},\dots,\mathcal{Z}^{(M)}, \mathcal{Y}}}[T] - \log \left( \mathbb{E}_{\mathbb{P}_{\mathcal{Z}^{(1)}} \times \dots \times \mathbb{P}_{\mathcal{Z}^{(M)}}\times \mathbb{P}_{\mathcal{Y}}}[e^T] \right)
    \end{split}.
    \label{DV_representation_TCNE}
\end{equation}
With the KL-divergence form of total correlation:
\begin{equation}
    \begin{split}
        \text{TC}(z^{(1)},\dots,z^{(M)}, y)  
        = D_{\text{KL}} \left( \mathbb{P}_{\mathcal{Z}^{(1)},\dots,\mathcal{Z}^{(M)}, \mathcal{Y}} \|  \mathbb{P}_{\mathcal{Z}^{(1)}} \times \dots \times \mathbb{P}_{\mathcal{Z}^{(M)}}\times \mathbb{P}_{\mathcal{Y}} \right),
    \end{split}
    \label{Total Correlation_KL}
\end{equation}
we can proof the Equation~\ref{TCNE_strict_appendix} in Corollary~\ref{TCNE_appendix}. As neural networks $T_\theta$ with parameter $\theta\in \Theta$ belongs to $\left\{T \big| T:\Omega \xrightarrow{} \mathbb{R} \right\}$, the supremum taken over all networks $T_\theta$ is less than or equal to the supremum taken over all functions $T$,
\begin{equation}
    \begin{split}
        \text{TC}(z^{(1)},\dots,z^{(M)}, y)  
        =& \sup_{T: \Omega \xrightarrow{} \mathbb{R}} \mathbb{E}_{\mathbb{P}_{\mathcal{Z}^{(1)},\dots,\mathcal{Z}^{(M)}, \mathcal{Y}}} \left[T\right] - \log \left( \mathbb{E}_{\mathbb{P}_{\mathcal{Z}^{(1)}} \times \dots \times \mathbb{P}_{\mathcal{Z}^{(M)}}\times \mathbb{P}_{\mathcal{Y}}} \left[ e^T \right] \right) \\
        \geq& \sup_{\theta \in \Theta} \mathbb{E}_{\mathbb{P}_{\mathcal{Z}^{(1)},\dots,\mathcal{Z}^{(M)}, \mathcal{Y}}} \left[T_\theta\right]  -  \log \left( \mathbb{E}_{\mathbb{P}_{\mathcal{Z}^{(1)}} \times \dots \times \mathbb{P}_{\mathcal{Z}^{(M)}}\times \mathbb{P}_{\mathcal{Y}}} \left[ e^{T_\theta} \right] \right). \\
    \end{split}
    \label{Total Correlation_NN}
\end{equation}
Thus we prove Corollary~\ref{TCNE_appendix}.

\subsection{Proof of Proposition~\ref{prop:TC_chain_inequality}}
\begin{proposition} [Proposition 1 restated]
The TC between the input data and labels, the TC between features and labels, and our proposed TCMax loss satisfy the following inequality:
\begin{equation}
    \begin{split}
        \text{\textnormal{TC}}( x^{(1)},\dots, x^{(M)}, y) 
        \geq \text{\textnormal{TC}}( z^{(1)},\dots, z^{(M)}, y) \geq
        - \mathcal{L}_{\text{\textnormal{TCMax}}} 
    \end{split}
    \label{equ:TC_chain_inequality_appendix}
\end{equation}
\label{prop:TC_chain_inequality_appendix}%
\textup{We first consider the TCNE form of $\text{\textnormal{TC}}( x^{(1)},\dots, x^{(M)}, y)$ and $\text{\textnormal{TC}}( z^{(1)},\dots, z^{(M)}, y)$:}
\end{proposition}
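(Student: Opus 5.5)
The plan is to write both total correlations in the TCNE dual form of Corollary~\ref{TCNE} and compare the classes of test functions over which the suprema are taken. Write $\Psi = (\psi^{(1)}_{\Theta_1},\dots,\psi^{(M)}_{\Theta_M})$ for the encoders and $\Omega_x = \mathcal{X}^{(1)}\times\dots\times\mathcal{X}^{(M)}\times\mathcal{Y}$. Applying Corollary~\ref{TCNE} on $\Omega_x$ gives
\[
\text{TC}(x^{(1)},\dots,x^{(M)},y)=\sup_{T':\Omega_x\to\mathbb{R}} \mathbb{E}_{\mathbb{P}_{\mathcal{X}^{(1)},\dots,\mathcal{X}^{(M)},\mathcal{Y}}}[T'] - \log \mathbb{E}_{\mathbb{P}_{\mathcal{X}^{(1)}}\times\dots\times\mathbb{P}_{\mathcal{X}^{(M)}}\times\mathbb{P}_{\mathcal{Y}}}[e^{T'}].
\]
The same representation holds on $\Omega$ for the features.

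\textbf{First inequality.} Take any admissible $T:\Omega\to\mathbb{R}$ and define $T'(x^{(1)},\dots,x^{(M)},y)=T(\psi^{(1)}(x^{(1)}),\dots,\psi^{(M)}(x^{(M)}),y)$. The joint law of $(z,y)$ is the pushforward of the joint law of $(x,y)$ under $\Psi\times\mathrm{id}$. The key observation is that each $z^{(m)}$ depends only on $x^{(m)}$, so the product of marginals $\mathbb{P}_{\mathcal{Z}^{(1)}}\times\dots\times\mathbb{P}_{\mathcal{Z}^{(M)}}\times\mathbb{P}_{\mathcal{Y}}$ is also exactly the pushforward of $\mathbb{P}_{\mathcal{X}^{(1)}}\times\dots\times\mathbb{P}_{\mathcal{X}^{(M)}}\times\mathbb{P}_{\mathcal{Y}}$ under the same map. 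This is the one step needing care: it relies on the encoders being modality-wise, and it would fail for a map that mixes modalities.

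By change of variables, both expectations in the DV objective of $T$ coincide with those of $T'$, so the two objective values are equal. Hence every value attained on the feature side is attained on the input side. Taking suprema yields $\text{TC}(x,y)\ge\text{TC}(z,y)$. This is the data-processing inequality for KL under a product-structured measurable map, and could alternatively be cited directly.

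\textbf{Second inequality.} Here I instantiate the neural-network bound of Corollary~\ref{TCNE} with $T_\theta(z^{(1)},\dots,z^{(M)},y)=f_\theta(z^{(1)},\dots,z^{(M)})_y$. Since the supremum over $\theta$ dominates the value at the current $\theta$, and by Equation~\ref{equ:TCMax_Loss} that value equals $-\mathcal{L}_{\text{TCMax}}$, we get $\text{TC}(z,y)\ge -\mathcal{L}_{\text{TCMax}}$. One can equivalently use the general DV inequality, Equation~\ref{DV_rep_final}, with $T=f_\theta$, assuming the expectations are finite. The second line of Equation~\ref{equ:TCMax_Loss} is the same quantity rewritten via the pushforward identity above. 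Chaining the two inequalities gives the statement.
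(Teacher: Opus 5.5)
Your proposal is correct and follows essentially the same route as the paper. Both inequalities come from comparing suprema in the TCNE/Donsker--Varadhan dual form: the first by composing feature-side test functions with the encoder map $\Psi$, so that their class embeds into the input-side class, and the second by evaluating the feature-side objective at the particular test function $T=f_\theta$. Your explicit check that $\mathbb{P}_{\mathcal{Z}^{(1)}}\times\dots\times\mathbb{P}_{\mathcal{Z}^{(M)}}\times\mathbb{P}_{\mathcal{Y}}$ is the pushforward of $\mathbb{P}_{\mathcal{X}^{(1)}}\times\dots\times\mathbb{P}_{\mathcal{X}^{(M)}}\times\mathbb{P}_{\mathcal{Y}}$, because the encoders act modality-wise, is a step the paper's rewriting of $\text{TC}(z^{(1)},\dots,z^{(M)},y)$ uses without stating, so it is a useful addition.
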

\begin{equation}
    \begin{split}
        &\text{\textnormal{TC}}(x^{(1)},\dots,x^{(M)}, y) 
        = \sup_{T_{\mathcal{X}}: \Omega_{\mathcal{X}} \xrightarrow{} \mathbb{R}} \mathbb{E}_{\mathbb{P}_{\mathcal{X}^{(1)},\dots,\mathcal{X}^{(M)}, \mathcal{Y}}} \left[T_{\mathcal{X}}\right] - \log \left( \mathbb{E}_{\mathbb{P}_{\mathcal{X}^{(1)}} \times \dots \times \mathbb{P}_{\mathcal{X}^{(M)}}\times \mathbb{P}_{\mathcal{Y}}} \left[ e^{T_{\mathcal{X}}} \right] \right),
    \end{split}
    \label{equ:TCNE_Form_X_appendix}
\end{equation}
\begin{equation}
    \begin{split}
        &\text{\textnormal{TC}}(z^{(1)},\dots,z^{(M)}, y) 
        = \sup_{T_{\mathcal{Z}}: \Omega_{\mathcal{Z}} \xrightarrow{} \mathbb{R}} \mathbb{E}_{\mathbb{P}_{\mathcal{Z}^{(1)},\dots,\mathcal{Z}^{(M)}, \mathcal{Y}}} \left[T_{\mathcal{Z}}\right] - \log \left( \mathbb{E}_{\mathbb{P}_{\mathcal{Z}^{(1)}} \times \dots \times \mathbb{P}_{\mathcal{Z}^{(M)}}\times \mathbb{P}_{\mathcal{Y}}} \left[ e^{T_{\mathcal{Z}}} \right] \right),
    \end{split}
    \label{equ:TCNE_Form_Z_appendix}
\end{equation}
where $\Omega_{\mathcal{X}}$ and $\Omega_{\mathcal{Z}}$ are the input space (including the label) and the embedding space, respectively.
As modality-specific encoders are used to extract the embedding $z^{(1,\cdots,M)}$ from the input $x^{(1,\cdots, M)}$, a fix function $\Psi:\Omega_{\mathcal{X}}\to \Omega_{\mathcal{Z}}$ is defined here. So we can rewrite Equation~\ref{equ:TCNE_Form_Z_appendix}:
\footnotesize{
\begin{equation}
    \begin{split}
        &\text{\textnormal{TC}}(z^{(1)},\dots,z^{(M)}, y) 
        = \sup_{T_{\mathcal{Z}}: \Omega_{\mathcal{Z}} \xrightarrow{} \mathbb{R}} \mathbb{E}_{\mathbb{P}_{\mathcal{X}^{(1)},\dots,\mathcal{X}^{(M)}, \mathcal{Y}}} \left[\Psi \circ T_{\mathcal{Z}}\right] - \log \left( \mathbb{E}_{\mathbb{P}_{\mathcal{X}^{(1)}} \times \dots \times \mathbb{P}_{\mathcal{X}^{(M)}}\times \mathbb{P}_{\mathcal{Y}}} \left[ e^{\Psi \circ T_{\mathcal{X}}} \right] \right).
    \end{split}
    \label{equ:TCNE_Form_Z_rewrite_appendix}
\end{equation}}%
Since $\{\Psi \circ T_{\mathcal{Z}} | T_{\mathcal{Z}}: \Omega_{\mathcal{Z}} \xrightarrow{} \mathbb{R} \}$ is a subset of $\{T_{\mathcal{X}} | T_{\mathcal{X}}: \Omega_{\mathcal{X}} \xrightarrow{} \mathbb{R} \}$, the supremum in Equation~\ref{equ:TCNE_Form_Z_rewrite_appendix} is not surpass than the supremum in Equation~\ref{equ:TCNE_Form_X_appendix}. Thus, we prove the first inequality in the proposition.

We first consider the form of TCMax loss to prove he second inequality: 
\begin{equation}
    \begin{split}
        \mathcal{L}_{\text{TCMax}} 
        &= -\mathbb{E}_{\mathbb{P}_{\mathcal{Z}^{(1)},\dots,\mathcal{Z}^{(M)}, \mathcal{Y}}} \left[f_\theta\right]  +  \log\left(\mathbb{E}_{\mathbb{P}_{\mathcal{Z}^{(1)}} \times \dots \times \mathbb{P}_{\mathcal{Z}^{(M)}}\times \mathbb{P}_{\mathcal{Y}}} \left[ e^{f_\theta} \right]\right).
    \end{split}
    \label{equ:TCMax_Loss_appendix}
\end{equation}
As the predicted head $f_\theta$ is a special case in $T_{\mathcal{Z}} | T_{\mathcal{Z}}: \Omega_{\mathcal{Z}} \xrightarrow{} \mathbb{R}$, therefore:
\begin{equation}
    \begin{split}
        \text{\textnormal{TC}}(z^{(1)},\dots,z^{(M)}, y) 
        =& \sup_{T_{\mathcal{Z}}: \Omega_{\mathcal{Z}} \xrightarrow{} \mathbb{R}} \mathbb{E}_{\mathbb{P}_{\mathcal{Z}^{(1)},\dots,\mathcal{Z}^{(M)}, \mathcal{Y}}} \left[T_{\mathcal{Z}}\right] - \log \left( \mathbb{E}_{\mathbb{P}_{\mathcal{Z}^{(1)}} \times \dots \times \mathbb{P}_{\mathcal{Z}^{(M)}}\times \mathbb{P}_{\mathcal{Y}}} \left[ e^{T_{\mathcal{Z}}} \right] \right) \\
        \geq& \mathbb{E}_{\mathbb{P}_{\mathcal{Z}^{(1)},\dots,\mathcal{Z}^{(M)}, \mathcal{Y}}} \left[f_\theta\right]  -  \log\left(\mathbb{E}_{\mathbb{P}_{\mathcal{Z}^{(1)}} \times \dots \times \mathbb{P}_{\mathcal{Z}^{(M)}}\times \mathbb{P}_{\mathcal{Y}}} \left[ e^{f_\theta} \right]\right) = -\mathcal{L}_{\text{TCMax}},
    \end{split}
    \label{equ:TCNE_Second_Inquality_Proof}
\end{equation}
thus, we prove the second inequality in the proposition.

\subsection{Proof of Proposition~\ref{prop:TCNE_equality}}
\begin{proposition} [Proposition~\ref{prop:TCNE_equality} restated]
    The supremum in in Equation~\ref{TCNE_strict_appendix} reaches its upper bound if and only if $\mathbb{P}_{\mathcal{Z}^{(1)},\dots,\mathcal{Z}^{(M)}, \mathcal{Y}} = \mathbb{G} $, where $\mathbb{G}$ is the Gibbs distribution defined as $\mathrm{d} \mathbb{G} = \frac{e^T}{\mathbb{E}_{\mathbb{Q}}\left[e^T\right]} \mathrm{d} \mathbb{Q}$ and $\mathbb{Q} = \mathbb{P}_{\mathcal{Z}^{(1)}} \times \dots \times \mathbb{P}_{\mathcal{Z}^{(M)}}\times \mathbb{P}_{\mathcal{Y}}$.
\label{TCNE_equality_appendix}
\end{proposition}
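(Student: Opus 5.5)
The plan is to reuse the gap identity from the proof of Corollary~\ref{TCNE_appendix}. The statement should be read as follows. For a fixed admissible $T$, the objective
\[
J(T)=\mathbb{E}_{\mathbb{P}}[T]-\log \mathbb{E}_{\mathbb{Q}}[e^T]
\]
attains the value $\text{TC}(z^{(1)},\dots,z^{(M)},y)$, which is its upper bound by Equation~\ref{TCNE_strict_appendix}, if and only if $\mathbb{P}=\mathbb{G}$. Here $\mathbb{P}=\mathbb{P}_{\mathcal{Z}^{(1)},\dots,\mathcal{Z}^{(M)},\mathcal{Y}}$, $\mathbb{Q}$ is the product of marginals, and $\mathbb{G}$ is the Gibbs measure induced by $T$.

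First, since $e^T>0$ and $\mathbb{E}_{\mathbb{Q}}[e^T]<\infty$ by admissibility, $\mathbb{G}$ is a well-defined probability measure equivalent to $\mathbb{Q}$, with density $\mathrm{d}\mathbb{G}/\mathrm{d}\mathbb{Q}=e^T/\mathbb{E}_{\mathbb{Q}}[e^T]$. Exactly as in Equation~\ref{DV_rep_right_hand}, this lets us write $J(T)=\mathbb{E}_{\mathbb{P}}[\log(\mathrm{d}\mathbb{G}/\mathrm{d}\mathbb{Q})]$.

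Second, use Equation~\ref{Total Correlation_KL}, which gives $\text{TC}=D_{\text{KL}}(\mathbb{P}\|\mathbb{Q})$. Assume this is finite, so that $\mathbb{P}\ll\mathbb{Q}$; if it is infinite, the bound can only be approached and the question is moot. Since $\mathbb{G}$ and $\mathbb{Q}$ are mutually absolutely continuous, $\mathbb{P}\ll\mathbb{G}$ as well, and the chain rule for Radon--Nikodym derivatives gives $\mathrm{d}\mathbb{P}/\mathrm{d}\mathbb{Q}=(\mathrm{d}\mathbb{P}/\mathrm{d}\mathbb{G})(\mathrm{d}\mathbb{G}/\mathrm{d}\mathbb{Q})$. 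Subtracting yields the gap identity of Equation~\ref{DV_rep_gap_KL}:
\[
\text{TC}-J(T)=D_{\text{KL}}(\mathbb{P}\|\mathbb{G}).
\]

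Third, the conclusion follows from the equality case of Gibbs' inequality: $D_{\text{KL}}(\mathbb{P}\|\mathbb{G})=0$ if and only if $\mathbb{P}=\mathbb{G}$. Hence $J(T)=\text{TC}$ exactly when $\mathbb{P}=\mathbb{G}$. For the converse direction I would add an explicit remark: if $\mathbb{P}=\mathbb{G}$, then $T=\log(\mathrm{d}\mathbb{P}/\mathrm{d}\mathbb{Q})+\text{const}$ holds $\mathbb{Q}$-a.e., which is admissible and attains the supremum. This shows the supremum in Equation~\ref{TCNE_strict_appendix} is in fact a maximum.

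The main obstacle is justifying that the subtraction in the gap identity is legitimate, i.e.\ that we never form $\infty-\infty$. This requires $\mathbb{E}_{\mathbb{P}}[T]$ to be finite, which is guaranteed by admissibility, and $D_{\text{KL}}(\mathbb{P}\|\mathbb{Q})$ to be finite. Given those, $\mathbb{E}_{\mathbb{P}}[\log(\mathrm{d}\mathbb{P}/\mathrm{d}\mathbb{G})]$ is well defined and finite. I would state these integrability conditions explicitly at the start, together with the observation that equality of measures means almost-everywhere equality of densities.
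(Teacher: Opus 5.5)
Your proposal is correct and follows the paper's proof: both write the gap $\text{TC}-J(T)$ as $D_{\text{KL}}(\mathbb{P}\|\mathbb{G})$ using the identity from the Corollary's proof, then apply the equality case of Gibbs' inequality. Your version adds the absolute-continuity and finiteness checks that the paper omits, and it correctly concludes $\mathbb{P}=\mathbb{G}$ where the paper's text has the typo $\mathbb{P}=\mathbb{Q}$. One caution: your closing claim that the supremum is ``in fact a maximum'' holds only when some real-valued admissible $T$ realizes $\mathbb{P}=\mathbb{G}$, which forces $\mathbb{P}$ and $\mathbb{Q}$ to be mutually absolutely continuous; if $\mathbb{P}$ vanishes where $\mathbb{Q}$ does not (e.g.\ $y$ a deterministic function of the features), the supremum is only approached, though this does not affect the equivalence you prove.
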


The supremum in Equation~\ref{TCNE_strict_appendix} reaches its upper bound when the gap is equal to 0,
\begin{equation}
    \Delta \equiv D_{\text{KL}} \left( \mathbb{P} \| \mathbb{Q} \right) - \mathbb{E}_{\mathbb{P}}[T] - \log \left( \mathbb{E}_{\mathbb{Q}}[e^T] \right)=0,
    \label{DV_rep_gap_zero}
\end{equation}
where $\mathbb{P}=\mathbb{P}_{\mathcal{Z}^{(1)},\dots,\mathcal{Z}^{(M)}, \mathcal{Y}}$, $\mathbb{Q} = \mathbb{P}_{\mathcal{Z}^{(1)}} \times \dots \times \mathbb{P}_{\mathcal{Z}^{(M)}}\times \mathbb{P}_{\mathcal{Y}}$. With the Gibbs distribution defined as $\mathrm{d} \mathbb{G} = \frac{e^T}{Z} \mathrm{d} \mathbb{Q}$, we have:
\begin{equation}
    \begin{split}
        0 = D_{\text{KL}} \left( \mathbb{P} \| \mathbb{Q} \right) - \mathbb{E}_{\mathbb{P}}\left[T\right] - \log \left( \mathbb{E}_{\mathbb{Q}} \left[ e^T \right] \right) = D_{\text{KL}}\left( \mathbb{P} \| \mathbb{G} \right),
    \end{split}
    \label{TCNE_Delta}
\end{equation}
where the second equality uses Equation~\ref{DV_rep_gap_KL}. With Gibbs' inequality, we know $D_{\text{KL}}\left( \mathbb{P} \| \mathbb{G} \right)$ equals 0 if and only if $\mathbb{P}=\mathbb{Q}$. Thus, we prove Proposition~\ref{TCNE_equality_appendix}.

\subsection{Proof of Proposition~\ref{TCMax_equality}}

\begin{proposition} [Proposition~\ref{TCMax_equality} restated]
    The two inequalities in Equation~\ref{equ:TC_chain_inequality_appendix} simultaneously hold as equalities if and only if  $\mathbb{P}_{\mathcal{X}^{(1)},\dots,\mathcal{X}^{(M)}, \mathcal{Y}} = \hat{\mathbb{G}}$, where $\hat{\mathbb{G}}$ is the Gibbs distribution defined as $\mathrm{d} \hat{\mathbb{G}} = \frac{e^{F_\Theta}}{\mathbb{E}_{\mathbb{Q}}\left[e^{F_\Theta}\right]} \mathrm{d} \mathbb{Q}$ and $\mathbb{Q} = \mathbb{P}_{\mathcal{X}^{(1)}} \times \dots \times \mathbb{P}_{\mathcal{X}^{(M)}}\times \mathbb{P}_{\mathcal{Y}}$.
\label{TCMax_equality_appendix}
\end{proposition}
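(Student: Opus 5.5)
The plan is to collapse the two inequalities of Equation~\ref{equ:TC_chain_inequality_appendix} into a single Donsker--Varadhan gap on the input space, and then read off the equality condition from the gap identity of Equation~\ref{DV_rep_gap_KL}. Both inequalities hold simultaneously as equalities if and only if their outer ends agree, i.e.\ if and only if $\text{TC}(x^{(1)},\dots,x^{(M)},y) = -\mathcal{L}_{\text{TCMax}}$. The reason is that the chain is monotone, so equality of the outer ends forces equality at the middle term, and the converse is trivial. It therefore suffices to characterize when $\text{TC}(x^{(1)},\dots,x^{(M)},y) + \mathcal{L}_{\text{TCMax}} = 0$.

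Next I will rewrite $-\mathcal{L}_{\text{TCMax}}$ in the input-space form given by the second line of Equation~\ref{equ:TCMax_Loss}:
\[
-\mathcal{L}_{\text{TCMax}} = \mathbb{E}_{\mathbb{P}}[F_\Theta] - \log \mathbb{E}_{\mathbb{Q}}[e^{F_\Theta}],
\]
with $\mathbb{P}=\mathbb{P}_{\mathcal{X}^{(1)},\dots,\mathcal{X}^{(M)},\mathcal{Y}}$ and $\mathbb{Q}=\mathbb{P}_{\mathcal{X}^{(1)}}\times\dots\times\mathbb{P}_{\mathcal{X}^{(M)}}\times\mathbb{P}_{\mathcal{Y}}$. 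This identity is justified by a change of variables under the fixed map $\Psi$, exactly as in the proof of Proposition~\ref{prop:TC_chain_inequality_appendix}. The one point to check is that the product of the $z$-marginals is the pushforward of the product of the $x$-marginals under $\Psi$; this holds because $\Psi$ acts coordinatewise through the modality-specific encoders.

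Then I will apply the gap identity of Equation~\ref{DV_rep_gap_KL} with $T=F_\Theta$ on $\Omega_{\mathcal{X}}$. Using the Corollary~\ref{TCNE_appendix} expression of $\text{TC}(x^{(1)},\dots,x^{(M)},y)$ as $D_{\text{KL}}(\mathbb{P}\|\mathbb{Q})$, this gives
\[
\text{TC}(x^{(1)},\dots,x^{(M)},y) + \mathcal{L}_{\text{TCMax}} = D_{\text{KL}}(\mathbb{P}\|\hat{\mathbb{G}}).
\]
By Gibbs' inequality, this quantity vanishes if and only if $\mathbb{P}=\hat{\mathbb{G}}$, which is the claim. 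This step is just Proposition~\ref{TCNE_equality_appendix} transplanted from $\Omega_{\mathcal{Z}}$ to $\Omega_{\mathcal{X}}$.

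The main obstacle is the technical hygiene rather than the idea. First, $\mathbb{E}_{\mathbb{Q}}[e^{F_\Theta}]$ must be finite, which is the admissibility condition in the Donsker--Varadhan theorem. Second, the densities $\mathrm{d}\mathbb{P}/\mathrm{d}\mathbb{Q}$ must exist so that $\hat{\mathbb{G}}$ can be compared with $\mathbb{P}$; this requires $\mathbb{P}\ll\mathbb{Q}$, which is automatic when the TC is finite. I will state these as standing assumptions, as the earlier proofs do implicitly. Equality $\mathbb{P}=\hat{\mathbb{G}}$ is understood as equality of measures, i.e.\ $F_\Theta = \log\frac{\mathrm{d}\mathbb{P}}{\mathrm{d}\mathbb{Q}} + \text{const}$ $\mathbb{Q}$-a.e.
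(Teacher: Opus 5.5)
Your proposal is correct and follows the paper's route. The paper likewise observes that simultaneous equality means $\text{TC}(x^{(1)},\dots,x^{(M)},y) = -\mathcal{L}_{\text{TCMax}}$, i.e.\ that $F_\Theta$ attains the Donsker--Varadhan supremum on $\Omega_{\mathcal{X}}$, and then invokes Proposition~\ref{TCNE_equality_appendix} (the gap identity $\Delta = D_{\text{KL}}(\mathbb{P}\|\hat{\mathbb{G}})$ plus Gibbs' inequality) transplanted to the input space; your explicit sandwich argument for the converse direction and your check that $\Psi$ pushes the product of input marginals onto the product of feature marginals simply fill in steps the paper leaves implicit.
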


Consider the TCMax loss:
\begin{equation}
    \begin{split}
        \mathcal{L}_{\text{TCMax}} 
        = -\mathbb{E}_{\mathbb{P}_{\mathcal{X}^{(1)},\dots,\mathcal{X}^{(M)}, \mathcal{Y}}} \left[F_\Theta\right]  +  \log\left(\mathbb{E}_{\mathbb{P}_{\mathcal{X}^{(1)}} \times \dots \times \mathbb{P}_{\mathcal{X}^{(M)}}\times \mathbb{P}_{\mathcal{Y}}} \left[ e^{F_\Theta} \right]\right).
    \end{split}
\end{equation}
When two inequalities in Equation~\ref{equ:TC_chain_inequality_appendix} simultaneously hold, we have:
\begin{equation}
    \begin{split}
        \text{\textnormal{TC}}(x^{(1)},\dots,x^{(M)}, y) 
        =& \sup_{T_{\mathcal{X}}: \Omega_{\mathcal{X}} \xrightarrow{} \mathbb{R}} \mathbb{E}_{\mathbb{P}_{\mathcal{X}^{(1)},\dots,\mathcal{X}^{(M)}, \mathcal{Y}}} \left[T_{\mathcal{X}}\right] - \log \left( \mathbb{E}_{\mathbb{P}_{\mathcal{X}^{(1)}} \times \dots \times \mathbb{P}_{\mathcal{X}^{(M)}}\times \mathbb{P}_{\mathcal{Y}}} \left[ e^{T_{\mathcal{X}}} \right] \right) \\
        =& - \mathcal{L}_{\text{TCMax}}\\
        =& \mathbb{E}_{\mathbb{P}_{\mathcal{X}^{(1)},\dots,\mathcal{X}^{(M)}, \mathcal{Y}}} \left[F_\Theta\right]  -  \log\left(\mathbb{E}_{\mathbb{P}_{\mathcal{X}^{(1)}} \times \dots \times \mathbb{P}_{\mathcal{X}^{(M)}}\times \mathbb{P}_{\mathcal{Y}}} \left[ e^{F_\Theta} \right]\right).
    \end{split}
    \label{equ:TCNE_X_F_appendix}
\end{equation}
Hence, $F_\Theta$ reaches the upper bound of the supremum. With Proposition~\ref{TCNE_equality_appendix}, we know Equation~\ref{equ:TCNE_X_F_appendix} holds if and only if $\mathbb{P}_{\mathcal{X}^{(1)},\dots,\mathcal{X}^{(M)}, \mathcal{Y}} = \hat{\mathbb{G}}$. Thus, we prove the proposition.

\subsection{Derivation of Equation~\ref{equ:TCMax_Loss_minibatch}}
Since batch $\mathcal{B}$ is sampled according to the overall data distribution, we can consider the samples $(x_i^{(a)}, x_i^{(v)}, y_i)$, $\forall i \in \mathcal{B}$ as drawn from $\mathbb{P}_{\mathcal{A}, \mathcal{V}, \mathcal{Y}}$, where $x_i^{(a)}$, $x_i^{(v)}$, and $y_i$ follow the marginal distributions $\mathbb{P}_{\mathcal{A}}$, $\mathbb{P}_{\mathcal{V}}$, and $\mathbb{P}_{\mathcal{Y}}$, respectively. 
Since label distributions are generally assumed to be relatively uniform, we hypothesize $\mathbb{P}_{\mathcal{Y}}$ to be a uniform distribution over $\mathcal{Y}$. Thus, in calculations, $\mathbb{P}_{\mathcal{Y}}$ is directly treated as uniform without relying on batch sampling results.
Substituting the assumptions into Equation~\ref{equ:TCMax_Loss} yields:
\begin{equation}
    \begin{split}
        \mathcal{L}_{\text{TCMax}} 
        =& -\frac{1}{\lvert \mathcal{B} \rvert} \sum_{i\in \mathcal{B}} \left( f_\theta\left(\psi^{(a)}_{\Theta_a}(x^{(a)}_i), \psi^{(v)}_{\Theta_v}(x^{(v)}_i)\right)_{y_i} \right)
         \\
        &+ \log \left\{ \frac{1}{\lvert \mathcal{B} \times \mathcal{B} \times \mathcal{Y} \rvert} 
            \sum_{(j,k,y^{'})\in \mathcal{B}\times\mathcal{B}\times \mathcal{Y}}  \exp{f_\theta\left(\psi^{(a)}_{\Theta_a}(x^{(a)}_j), \psi^{(v)}_{\Theta_v}(x^{(v)}_k)\right)}_{y^{'}} 
        \right\} \\
        =& -\frac{1}{\lvert \mathcal{B} \rvert} \sum_{i\in \mathcal{B}} \log{
            \frac{
                \exp{f_\theta\left(\psi^{(a)}_{\Theta_a}(x^{(a)}_i), \psi^{(v)}_{\Theta_v}(x^{(v)}_i)\right)}_{y_i}
            }
            {
                \sum_{(j,k,y^{'})\in \mathcal{B}\times\mathcal{B}\times \mathcal{Y}}  \exp{f_\theta\left(\psi^{(a)}_{\Theta_a}(x^{(a)}_j), \psi^{(v)}_{\Theta_v}(x^{(v)}_k)\right)}_{y^{'}}
            }
        }-\log{\lvert \mathcal{B} \times \mathcal{B} \times \mathcal{Y} \rvert} \\
        =& -\frac{1}{\lvert \mathcal{B} \rvert} \sum_{i\in \mathcal{B}} \log{
            \frac{
                \exp{f_\theta\left(\psi^{(a)}_{\Theta_a}(x^{(a)}_i), \psi^{(v)}_{\Theta_v}(x^{(v)}_i)\right)}_{y_i}
            }
            {
                \sum_{(j,k,y^{'})\in \mathcal{B}\times\mathcal{B}\times \mathcal{Y}}  \exp{f_\theta\left(\psi^{(a)}_{\Theta_a}(x^{(a)}_j), \psi^{(v)}_{\Theta_v}(x^{(v)}_k)\right)}_{y^{'}}
            }
        }-\log{\lvert \mathcal{B} \rvert^2\lvert \mathcal{Y} \rvert},
    \end{split}
\end{equation}
Thus, Equation~\ref{equ:TCMax_Loss_minibatch} is obtained.

\section{Details of Experiment}
\subsection{Details of Baselines}
\label{subsec:exp_baseline_details}
\paragraph{Concatenation} Concatenation is a straightforward approach to multimodal fusion where the features from different modalities are combined by concatenating them into a single feature vector. In our experiments, we feed the combining vector into a single fully connected layer to get the prediction, which can be denoted as $f(X_i)=W\left[h_1(x_{i,1}), \cdots h_M(x_{i, M})\right] + b$. It can be decomposed as $f\left(X_i\right) = \sum_{m=1}^M \left\{ W_m h_m (x_{i,m}) + b/M \right\}$ so can be simplified during training and define the output of modality $m$ is $f_m = W_m h_m (x_{i,m}) + b/M$ following \cite{MMF-07-OGM-GE}.

\paragraph{Share Prediction Head} This method uses a shared prediction head to calculate the output of every modality, then sums up all outputs from all modalities as the final output. Same as concatenation, we use a single fully connected layer as the shared head, and output can be denoted as $f\left(X_i\right) = \sum_{m=1}^M \left\{ W h_m (x_{i,m}) + b/M\right\}$. Output of modality $m$ is defined as $f_m = W h_m (x_{i,m}) + b/M$.

\paragraph{FiLM~\cite{MMF-11-FiLM}} FiLM modulates the feature from a modality using the feature from the other modality. Specifically, a FiLM layer performs a straightforward affine transformation on each feature of a neural network's intermediate representations, modulated by an arbitrary input. The output by FiLM is denoted as $f(X_i)=g \left(\gamma\left(h_2 (x_{i,2})\right) \circ h_1 (x_{i,1}) + \beta\left(h_2 (x_{i,2})\right)\right)$, where $g$, $\gamma$ and $\beta$ are fully connected layers and $\circ$ here is the Hadamard product. 

\paragraph{Gated~\cite{MMF-13-Gated}} Similar to FiLM, Gated uses the feature from one modality to modulate the other. The output of Gated is denoted as $f(X_i)=g \left(\gamma\left(h_1 (x_{i,1})\right) \circ \sigma\left( \beta\left(h_2 (x_{i,2})\right)\right)\right)$, where $g$, $\gamma$ and $\beta$ are fully connected layers and $\sigma$ is the sigmoid function. Notice the form of output by FiLM and Gated can not be decomposed into parts of modalities, so in our experiment, we deactivate a modality by inputting a zero tensor to get the single modality performance. In our experiment, audio modality is modality 1 and visual modality is modality 2 in the formula.

\paragraph{Unimodal Learning} Naive Unimodal Learning trains each modality separately and combines them during the prediction phase. Training separately helps the encoder of each modality efficiently learn modality-specific information. However, the model is unable to tell whether inputs from different modalities come from the same sample as modalities is inputted independently during training.

\subsection{Details of the Experiment on MVSA (Table 3)}

In the experiment, to align with CLIP's prediction paradigm, we define class-specific features $c^{(i)}_y$ and $c^{(t)}_y$ for each class. For each label $y$, the model's output logit is computed as:
\begin{equation}
    f_\theta(f_i, f_t) = \frac{s(c^{(i)}_y,f_i)}{\tau} + \frac{s(c^{(t)}_y,f_t)}{\tau}
\end{equation}
where:
\begin{itemize}
    \item $\theta = \{c^{(i)}_y |y\in\mathcal{Y}\} \cup \{c^{(t)}_y |y\in\mathcal{Y}\}$ is the trainable class-specific features
    \item $f_i$ and $f_t$ are CLIP's output features
    \item $\tau$ is CLIP's temperature coefficient
    \item $s(\cdot,\cdot)$ denotes cosine similarity
\end{itemize}

Training configuration:
\begin{itemize}
    \item Total epochs: 100
    \item Batch size: 32
    \item Learning rate: 0.01 (decayed to 0.001 after epoch 70)
\end{itemize}

Due to the relatively small accuracy differences observed in the experiment, we include error bounds with $95\%$ confidence intervals in Table~\ref{tab:CLIP_MVSA_appendix}. The results show that although the performance is very close, there is no overlap between the confidence intervals of multimodal accuracy (Multi) of \ours\ and the second-best method (Joint Learning). This statistically significant difference ($p > 0.975^2>0.95$) confirms that the performance gap is not caused by experimental variance.
\begin{table}[h]
\centering
\small
\caption{Result of the average test accuracy(\%) of 10 random seeds on MVSA with frozen CLIP pretrained encoders. We report the $95\%$ confidence intervals.}
\resizebox{1 \linewidth}{!}{
\begin{tabular}{l|ccc|ccc}
\toprule[1pt]
\multirow{2}{*}{\textbf{Methods}} & \multicolumn{3}{c|}{RN50} & \multicolumn{3}{c}{ViT-B/32} \\
& Image & Text & \textbf{Multi} & Image & Text & \textbf{Multi} \\
\midrule[0.4pt]
Joint 
& {$75.76\pm 0.20$}  & {$73.60\pm 0.16$} & {$81.23\pm 0.27$}  
& {$76.88\pm 0.09$}  & {$74.27\pm 0.27$} & {$82.83\pm 0.15$}  \\

Unimodal 
& {$\textbf{76.74}\pm 0.07$}  & {$\textbf{77.16}\pm 0.22$} & {$80.02\pm 0.13$}  
& {$\textbf{78.54}\pm 0.09$}  & {$\textbf{76.97}\pm 0.07$} & {$81.77 \pm 0.15$}  \\

TCMax 
& {$75.38\pm 0.12$}  & {$74.97\pm 0.54$} & {$\textbf{81.75}\pm 0.23$}  
& {$78.03\pm 0.16$}  & {$76.55\pm 0.22$} & {$\textbf{84.05}\pm 0.15$}  \\
\bottomrule[1pt]
\end{tabular}}
\label{tab:CLIP_MVSA_appendix}
\end{table}
\section{Potential in Regression Tasks}

Although this paper primarily focuses on the task of multimodal image classification, \ours\ may also be applied to other tasks, such as regression.
Here, we use a simple regression task as an example to explore the potential of \ours\ in regression scenarios.


We employ two sentiment analysis datasets: CMU-MOSI and CMU-MOSEI. We use MAG-BERT~\cite{Rebuttal-03} as the baseline, which takes multimodal inputs from audio ($\mathcal{A}$), visual ($\mathcal{V}$), and text ($\mathcal{L}$) modalities and outputs a continuous value ($\mathcal{Y}$) representing the degree of positive sentiment.  
To train with \ours\, we first define the mapping $F_\Theta : \mathcal{A} \times \mathcal{V} \times \mathcal{L} \times \mathcal{Y} \to \mathbb{R}$ in Equation~\ref{equ:TCMax_Loss}.  

We modified the prediction head to output both the predicted value and its confidence: $y_{pred}=y_{pred}(a,v,l)$ and $c_{pred}(a,v,l)$. We define $F_\Theta$ in our paper's Equation~\ref{equ:TCMax_Loss} as:
\begin{equation}
    \begin{split}
        F_\Theta(a,v,l,y) = -\frac{(y_{pred}(a,v,l)-y)^2}{\sigma^2} + \lambda c_{pred}(a,v,l),
    \end{split}
\end{equation}
where $\sigma$ represents the standard deviation in the predicted Gaussian distribution, and $\lambda$ is a multiplicative coefficient to facilitate analysis.
Let $\mathcal{B}$ and $\mathcal{B}_{\text{ns}}$ denote the batch and sampled negative set, respectively. Substituting this into Equation~\ref{equ:TCMax_Loss} yields:
\begin{equation}
    \begin{split}
        \mathcal{L}_{\text{TCMax}} = & -\mathbb{E}_{(a,v,l,y) \in \mathcal{B}}\left[-\frac{(y_{pred}(a,v,l)-y)^2}{\sigma^2} + \lambda c_{pred}(a,v,l)\right] \\
        &+ \log \mathbb{E}_{(a,v,l,y) \in \mathcal{B}_{\text{ns}}}\left[\exp{\left(-\frac{(y_{pred}(a,v,l)-y)^2}{\sigma^2} + \lambda c_{pred}(a,v,l)\right)}\right].
    \end{split}
    \label{equ:TCMax_Loss_continue}
\end{equation}

For classification tasks where $y$ takes discrete values, we conventionally assume a uniform distribution $\mathbb{P} _{\mathcal{Y}}$ since labels typically occur with roughly equal frequency. However, in this continuous $y$ case, we must explicitly define $y$'s probability distribution in $\mathcal{B}_{\text{ns}}$.
For the derivation, we assume $y$ follows a uniform distribution over interval $[a,b]$, yielding the following loss function:
\begin{equation}
    \begin{split}
        \mathcal{L}_{\text{TCMax}} =& \frac{1}{\sigma^2}\underbrace{\mathbb{E}_{(a,v,l,y) \in \mathcal{B}}\left[(y_{pred}(a,v,l)-y)^2\right]}_{\text{MSE}} \\
        &+ \lambda \underbrace{\left\{-\mathbb{E}_{(a,v,l,y) \in \mathcal{B}}\left[ c_{pred}(a,v,l)\right] + \log \mathbb{E}_{(a,v,l,y) \in \mathcal{B}_{\text{ns}}}\left[\exp{\left(c_{pred}(a,v,l)\right)}\right]\right\}}_{\text{TCMax}} \\
        &+ \log{\int_a^b  \exp{\left(-\frac{(y_{pred}(a,v,l)-y)^2}{\sigma^2}\right)} \mathrm{d}y} - \log{(b-a)}.
    \end{split}
\end{equation}

By taking the limits as $a\to -\infty$ and $b\to +\infty$ (i.e., extending to $\mathbb{R}$), while observing that the $-\log{(b-a)}$ term becomes divergent yet vanishes during differentiation (being a constant), we obtain the final form:
\begin{equation}
    \begin{split}
        \mathcal{L}_{\text{TCMax}} =& \frac{1}{\sigma^2}\underbrace{\mathbb{E}_{(a,v,l,y) \in \mathcal{B}}\left[(y_{pred}(a,v,l)-y)^2\right]}_{\text{MSE}} \\
        &+ \lambda \underbrace{\left\{-\mathbb{E}_{(a,v,l,y) \in \mathcal{B}}\left[ c_{pred}(a,v,l)\right] + \log \mathbb{E}_{(a,v,l,y) \in \mathcal{B}_{\text{ns}}}\left[\exp{\left(c_{pred}(a,v,l)\right)}\right]\right\}}_{\text{TCMax}} \\
        &+ \lim_{a\to-\infty} \lim_{b->+\infty} \log{\int_a^b  \exp{\left(-\frac{(y_{pred}(a,v,l)-y)^2}{\sigma^2}\right)} \mathrm{d}y} \\
        =& \frac{1}{\sigma^2}\underbrace{\mathbb{E}_{(a,v,l,y) \in \mathcal{B}}\left[(y_{pred}(a,v,l)-y)^2\right]}_{\text{MSE}} \\
        &+ \lambda \underbrace{\left\{-\mathbb{E}_{(a,v,l,y) \in \mathcal{B}}\left[ c_{pred}(a,v,l)\right] + \log \mathbb{E}_{(a,v,l,y) \in \mathcal{B}_{\text{ns}}}\left[\exp{\left(c_{pred}(a,v,l)\right)}\right]\right\}}_{\text{TCMax}} + \log{\sqrt{\pi}\sigma}.
    \end{split}
\end{equation}

The resulting loss function naturally decomposes into two components: (1) The first term constrains regression accuracy (MSE); (2) The second term enforces TCMax's cross-modal alignment constraint on $(a,v,l)$ triplets. When $\lambda=0$ (eliminating $c_{pred}$), the loss reduces to conventional MSE as the second term is eliminated.

\begin{table}[b]
\centering
\small
\caption{\small Result on CMU-MOSI and CMU-MOSEI datasets. For CMU-MOSI, we set $\sigma=0.5$, while for CMU-MOSEI, $\sigma=0.75$. In all experiments, $\lambda=1$, and all results represent averages across three random seeds.}
\begin{tabular}{l|l|cccc}
\toprule[1pt]
Dataset & Mehod & Binary Acc$\uparrow$ & F1$\uparrow$ & MAE$\downarrow$ & Corr$\uparrow$ \\ \midrule[0.4pt]
\multirow{2}{*}{CMU-MOSI} & Baseline (MAG-BERT) & 83.36 & 83.21	& 0.7938 & 0.7644 \\
& \ours\ & 84.27 & 84.13 & 0.7775 & 0.7753 \\ \midrule[0.4pt]
\multirow{2}{*}{CMU-MOSEI} & Baseline (MAG-BERT) & 85.14 & 85.10 & 0.5903 & 0.7867 \\
& \ours\ & 85.61 & 85.52 & 0.5889 & 0.7887 \\
\bottomrule[1pt]
\end{tabular}
\label{tab:MAG_BERT_appendix}
\end{table}

Table~\ref{tab:MAG_BERT_appendix} presents the experimental results on the CMU-MOSI and CMU-MOSEI datasets. After training with \ours, the model achieves modest improvements on both datasets. This suggests that \ours\ may hold potential research value for such regression tasks, and further in-depth studies will be conducted in the future.

\end{document}